\let\originalleft\left
\let\originalright\right
\renewcommand{\left}{\mathopen{}\mathclose\bgroup\originalleft}
\renewcommand{\right}{\aftergroup\egroup\originalright}
\newtheorem{assumption}{Assumption}
 \newtheorem{lemma}{Lemma}
 \newtheorem{theorem}{Theorem}
 \newtheorem*{theorem*}{Theorem}
 \newtheorem{definition}{Definition}
\definecolor{pear}{HTML}{c60404}
\definecolor{lightblue}{HTML}{2980B9}
\newcommand{\set}[1]{\left\{#1\right\}}
\newcommand{\pa}[1]{\left(#1\right)}
\newcommand{\abs}[1]{\left|#1\right|}
\newcommand{\norm}[1]{\left\|#1\right\|}
\newcommand{\imp}{\Rightarrow}
\newcommand{\sign}{\mathrm{sign}}
\DeclareMathOperator*{\argmax}{arg\,max}
\DeclareMathOperator*{\argmin}{arg\,min}
\newcommand{\ceil}[1]{\left\lceil#1\right\rceil}
\newcommand{\transpose}{^\mathsf{\scriptscriptstyle T}}
\newcommand{\bsl}{\backslash}
\newcommand{\reach}[1]{\overset{#1}{\rightsquigarrow}}
\newcommand{\bmu}{{\boldsymbol \mu}}
\newcommand{\btheta}{{\boldsymbol \theta}}
\newcommand{\blambda}{{\boldsymbol \lambda}}
\newcommand{\bheta}{{\boldsymbol \eta}}
\newcommand{\ba}{{\bf a}}
\newcommand{\bb}{{\bf b}}
\newcommand{\bB}{{\bf B}}
\newcommand{\bc}{{\bf c}}
\newcommand{\bp}{\boldsymbol{p}}
\newcommand{\be}{{\bf e}}
\newcommand{\bk}{{\bf k}}
\newcommand{\bs}{{\bf s}}
\newcommand{\bq}{{\bf q}}
\newcommand{\bu}{{\bf u}}
\newcommand{\bv}{{\bf v}}
\newcommand{\by}{{\bf y}}
\newcommand{\bx}{{\bf x}}
\newcommand{\bX}{{\bf X}}
\newcommand{\bY}{{\bf Y}}
\newcommand{\bZ}{{\bf Z}}
\newcommand{\R}{\mathbb{R}}
\newcommand{\N}{\mathbb{N}}
\newcommand{\cA}{\mathcal{A}}
\newcommand{\cE}{\mathcal{E}}
\newcommand{\cF}{\mathcal{F}}
\newcommand{\cG}{\mathcal{G}}
\newcommand{\cH}{\mathcal{H}}
\newcommand{\cM}{\mathcal{M}}
\newcommand{\cN}{\mathcal{N}}
\newcommand{\cO}{\mathcal{O}}
\newcommand{\cP}{\mathcal{P}}
\newcommand{\cS}{\mathcal{S}}
\newcommand{\cU}{\mathcal{U}}
\newcommand{\EE}[1]{\mathbb{E}\left[#1\right]}
\newcommand{\EEc}[2]{\mathbb{E}\left[\left.#1\right|#2\right]}
\newcommand{\PP}[1]{\mathbb{P}\left[#1\right]}
\newcommand{\Prb}{\mathbb{P}}
\newcommand{\PPc}[2]{\mathbb{P}\left[\left.#1\right|#2\right]}
\newcommand{\II}[1]{\mathbb{I}{\left\{#1\right\}}}
\newcommand{\Bernoulli}{\mathrm{Bernoulli}}
\newcommand{\mean}[1]{\bar\mu_{#1}}
\newcommand{\fA}{\mathfrak{A}}
\newcommand{\fB}{\mathfrak{B}}
\newcommand{\fC}{\mathfrak{C}}
\newcommand{\fD}{\mathfrak{D}}
\newcommand{\fM}{\mathfrak{M}}
\newcommand{\fR}{\mathfrak{R}}
\newcommand{\fS}{\mathfrak{S}}
\newcommand{\fT}{\mathfrak{T}}
\newcommand{\fY}{\mathfrak{Y}}
\newcommand{\fZ}{\mathfrak{Z}}
\newcommand{\eps}{\varepsilon}
\renewcommand{\tilde}{\widetilde}
\renewcommand{\bar}{\overline}
\newcommand{\counter}[1]{N_{#1}}
\newcommand{\ora}{\mathrm{Oracle}}
\newcommand{\numrel}[2]{
  \refstepcounter{equation}
  \ltx@label{#2}
  \overset{(\theequation)}{#1}
}
\newcommand{\numterm}[1]{\refstepcounter{equation} \ltx@label{#1} (\theequation)}
\newcounter{mylabelcounter}
\newcommand{\labelText}[2]{%
#1\refstepcounter{mylabelcounter}%
\immediate\write\@auxout{%
  \string\newlabel{#2}{{1}{\thepage}{{\unexpanded{#1}}}{mylabelcounter.\number\value{mylabelcounter}}{}}%
}%
}
\title{When Combinatorial Thompson Sampling meets Approximation Regret}
\author{%
  Pierre Perrault \\
  Idemia\\
  \texttt{pierre.perrault@idemia.com}
}
\begin{document}

\maketitle
\begin{abstract}

We study the Combinatorial Thompson Sampling policy (\textsc{cts}) for combinatorial multi-armed bandit problems (CMAB), within an \emph{approximation regret} setting. Although \textsc{cts} has attracted a lot of interest, it has a drawback that other
usual CMAB policies do not have when considering non-exact oracles: for some oracles, \textsc{cts} has a poor approximation regret (scaling linearly with the time horizon~$T$) \citep{Wang2018}. 
A study is then necessary to discriminate the oracles on which \textsc{cts} could learn.
This study was started by \citet{kong2021hardness}: they 
gave the first approximation regret analysis of \textsc{cts} for the \emph{greedy oracle}, obtaining an upper bound of order $\cO\pa{\log(T)/\Delta^2}$, where $\Delta$ is some minimal reward gap.
In this paper, our objective is to push this study further than the simple case of the greedy oracle.
We provide
the first $\cO\pa{\log(T)/\Delta}$ approximation regret upper bound for \textsc{cts}, obtained under a specific condition on the approximation oracle,
allowing a reduction
to the exact oracle analysis. We thus term this condition \textsc{reduce2exact}, and observe that it is
satisfied in many \emph{concrete} examples. Moreover, it can be extended to the \emph{probabilistically triggered arms} setting, 
thus capturing even more problems, such as \emph{online influence maximization}.
\end{abstract}

\section{Introduction}
\label{sec:intro}
Stochastic multi-armed bandits (MAB) \citep{robbins1952some,berry1985bandit,lai1985asymptotically} are decision-making problems in which an \emph{agent} acts sequentially in an uncertain environment. At each round $t\in \N^*$, the agent must select one arm from a fixed set of $n$ arms, denoted by $[n]\triangleq \set{1,\dots,n}$, using a \emph{policy}, based on the feedback from the previous rounds. Then it gets as feedback an \emph{outcome} $X_{i,t}\in \R$ --- a random variable sampled from $\Prb_{X_i}$, independently from previous rounds --- where $i$ is the selected arm and  $\Prb_{X_i}$ is a probability distribution --- unknown to the agent --- of mean $\mu_i^*$. 
The  goal for the agent  is to maximize the \emph{cumulative  reward}  over  a  total  of $T$ rounds  ($T$ is the \emph{time horizon} and may be  unknown). The  performance  metric of a policy is its \emph{regret} $R_T$,  which is  the expectation of the  difference over $T$ rounds between the cumulative reward of the policy that always picked the arm with the highest expected reward and the cumulative reward of the learning policy.  
MAB models the so called dilemma between exploration and exploitation, i.e.,  whether to continue exploring arms to obtain more information (and thus strengthen the confidence in the estimates of the distributions $\Prb_{X_i}$), or to use the information gathered by playing the best arm according to the observations so far.

In this paper, we study stochastic combinatorial multi-armed bandit (CMAB), with \emph{semi-bandit feedback}, a.k.a. stochastic semi-bandit (still abbreviated as CMAB in this paper), an extension of MAB where the agent plays an \emph{action} (also called \emph{super-arm}) $A_t\in \cA\subset \cP([n])$ at each round~$t$, where $\cA$ is fixed and called \emph{action space}. 
The feedback includes the outcomes of all base arms in the played super-arm.\footnote{Note that we will consider the \emph{probabilistically triggered arms} extension in this paper, i.e., where the feedback is on triggered arms. For brevity, we do not present this generalization in the introduction.} 
The expected reward, given $A_t$, is assumed to be in the form\footnote{Henceforth, we typeset vectors and matrices in bold and indicate components with indices, e.g., $\ba=(a_i)_{i\in [n]} \in \R^n$. We also let $\be_i$ be the $i^{th}$ canonical unit
vector of $\R^n$, and define the \emph{incidence vector} of any subset $A\subset [n]$ as \(\be_A\triangleq \sum_{i\in A}\be_i.\) We denote by $\ba\odot\bb\triangleq (a_ib_i)$ the Hadamard product of two vectors $\ba$ and~$\bb$.
} $r(A_t,\bmu^*)$, where $\bmu^*\in \R^n$ is the unknown vector of expectations (traditionally, the reward is linear and equal to $\be_{A_t}\transpose \bmu^*$).
In recent years, CMAB
has attracted a lot of interest (see e.g. \citet{cesa-bianchi2012combinatorial,gai2012combinatorial,chen13a,Chen2015combinatorial,kveton2015tight,wang2017improving,perrault:tel-03093268}), particularly due to its wide applications in network routing, online advertising, recommender system, influence marketing, etc.

Many CMAB policies are based on the \emph{Upper Confidence Bound} (UCB) approach, extending the classical \textsc{ucb} policy  \citep{auer2002finite} from MAB to CMAB. This type of approach uses an optimistic estimate $\bmu_t$ of $\bmu^*$ (i.e., for which the reward function is overestimated), lying in a well-chosen confidence region. Then, the action is chosen by plugging $\bmu_t$ inside an \emph{oracle} (typically, $\ora\pa{\bmu^*}$ is a maximizer of the reward function $A\mapsto r(A,\bmu^*)$).
 An example of such policy is \emph{Combinatorial Upper Confidence Bound} (\textsc{cucb}) \citep{chen13a,kveton2015tight}, that uses a Cartesian product of the individual confidence intervals of each arm as a confidence region. For mutually independent arms,
 \citet{combes2015combinatorial} provided the UCB-style policy  \emph{Efficient Sampling for Combinatorial Bandit} (\textsc{escb}), building a tighter axis-aligned ellipsoidal confidence region around the empirical mean, which helps to better restrict the exploration. \citet{Degenne2016} provided a policy called \textsc{ols-ucb}, leveraging a \emph{sub-Gaussianity} assumption on the arms to generalize the 
 \textsc{escb} approach.
 These policies have been further extended to more general settings afterwards \citep{perraultbudgeted2020, perrault2020covariance-adapting, perrault2019exploiting}.
 Although improving \textsc{cucb}, all these generalizations are  inefficient in terms of computation time.
 
Another paradigm that has recently gained interest (and which will be our focus in this paper) is to rely on \emph{Thompson Sampling} (\textsc{ts}) instead of \textsc{ucb}, still targetting \emph{frequentist regret}.
Although introduced much earlier by \citet{thompson1933likelihood}, the theoretical analysis of \textsc{ts} for MAB is quite recent: \citet{kaufmann2012thompson,agrawal2012thompsonarxiv} gave
a regret bound matching the \textsc{ucb} policy theoretically.  Moreover, \textsc{ts} often performs better than \textsc{ucb} in practice, making \textsc{ts} an attractive policy for further investigations. For CMAB, \textsc{ts} extends to \emph{Combinatorial Thompson Sampling}
(\textsc{cts}). In \textsc{cts}, the unknown mean $\bmu^*$ is associated with a belief (a prior distribution, that could be e.g. a product of Beta or Gaussian distributions) updated to a posterior with the Bayes’rule, each time a feedback is received. In order to choose an action at round~$t$, \textsc{cts} draws a sample $\btheta_t$ from the current belief, and plays the action given by $\ora\pa{\btheta_t}$.
\textsc{cts}
is an attractive policy because it has similar advantages to the previously mentioned policies working with ellipsoidal confidence regions while being, like \textsc{cucb},
computationally efficient. Indeed,
 recently, for mutually independent arms and sub-Gaussian arms respectively, \citet{Wang2018, perrault2020statistical} proposed tight analyses of \textsc{cts}. 

Unlike UCB-based policies, the analysis of \textsc{cts} is valid only when $\ora$ is exact, i.e., when \[\ora\pa{\bmu}\in \argmax_{A\in \cA} r(A,\bmu).\]
Although this holds true for many combinatorial problem described by the pair $(r,\cA)$ (we recall that $r$ is usually linear), there exist some problems where the requirement on $\ora$ has to be relaxed in order to make it tractable. 
This is usually done considering an $\alpha$-\emph{approximation} oracle \citep{chen13a, Chen2015combinatorial, wen2016influence}, for $\alpha\in(0,1)$:  
\begin{align}r(\ora\pa{\bmu},\bmu)\geq \alpha\max_{A\in \cA} r(A,\bmu).\label{eq:alphamax}\end{align}
Under an $\alpha$-approximation oracle, the benchmark cumulative reward is the $\alpha$-fraction of the optimal reward, leading to the notion of \emph{approximation regret} \citep{kakade2009playing,streeter2009online,Chen2015combinatorial}.

Aware of the limitation of their \textsc{cts} analysis (that works only with exact oracles), \citet{Wang2018} also proved, in their Theorem 2, that this limitation is not a technical artifact. More precisely, they provided a specific CMAB instance with an associated approximation oracle, such that \textsc{cts} on this instance and with this oracle must have a regret scaling linearly in $T$.
Although this negative result is of great interest to the research community, some concerns limit its consideration. Indeed, not only the CMAB instance provided by \citet{Wang2018} is actually a MAB one (meaning that there is an efficient oracle that simply enumerates the arms), so the use of an approximation regret is not justified, but above all, both the oracle and the instance are uncommon and designed for the proof.

Interested in the question of whether the example provided by \citet{Wang2018} is pathological or generalizable, \citet{kong2021hardness} recently
initiated a study, which revealed that linear approximation regret for \textsc{cts} seems to be pathological. More precisely, they derived a $\cO\pa{\log(T)/\Delta^2}$ bound for the specific case of a \emph{greedy} oracle\footnote{It is worth mentioning that this oracle is one of the most common, so it is logical to focus on it first.}, where $\Delta$ is some reward gap. 
This result is obtained by bounding the approximation regret by a \emph{greedy regret}, that simply replaces $\alpha\max_{A\in \cA} r(A,\bmu^*)$ with $r(\ora\pa{\bmu^*},\bmu^*)$, using equation~\eqref{eq:alphamax}. They also gave a tight lower bound on the greedy regret.

In this paper, we want to  explore another class of oracles covering more problems in practice. Our goal is to demonstrate that although there are instrumental examples of problems where \textsc{cts} has a linear regret, the majority of concrete problems do not follow this regime, and are in fact similar to the exact oracle case.


\paragraph{Contributions} 
With a specific condition on $\ora$, we describe a general set of CMAB
    problems where the approximation regret of \textsc{cts} has a $\cO\pa{\log(T)/\Delta}$ bound, improving by a factor $1/\Delta$ over the bound of \citet{kong2021hardness}. This does not contradict their lower bound, which is focused on the greedy regret.
    We call this set of CMAB
    problems \textsc{reduce2exact}, because, as we will see, a \textsc{reduce2exact} problem can be approximated using a reduction to sub-problems that can be solved exactly.
    Our main result is the approximation regret guaranty for \textsc{reduce2exact} problems, provided in Theorem~\ref{thm:tsapprox}.
    The \textsc{reduce2exact} condition on $\ora$ is structural and applies notably to greedy algorithms for submodular maximization.
    In particular, it allows to deal with problems such as \emph{probabilistic maximum coverage}.
    We note that \textsc{reduce2exact} is compatible with the \emph{probabilistically triggered arms} setting, which allows us to capture even more problems, such as \emph{online influence maximization} \citep{wen2017online,wang2017improving}.
    As we want to focus on concrete problems, we provide several other examples that belongs to \textsc{reduce2exact}: \emph{Metric k-center}, \emph{Vertex cover}, \emph{Max-Cut} and \emph{Travelling salesman problem}.

\paragraph{Further related work}
We refer the reader to \citet{Wang2018} for more related work on \textsc{ts} for combinatorial bandits. Briefly, one can mention \citet{gopalan2013thompson}, that gave a frequentist high-probability regret bounds for \textsc{ts} with
 a general action space and feedback model ---  
\citet{Komiyama2015}, that studied \textsc{ts} for the $m$-sets action space --- \citet{wen2015efficient}, that studied \textsc{ts} for
contextual CMAB problems, using the Bayesian regret metric (see also  \citet{russo2016information}).

\paragraph{Other known limitations of \textsc{cts}}
Apart from the limitation related to the approximation regret that interests us in this paper, there are some other existing limitations of \textsc{cts} highlighted in the literature, which we review here: The \textsc{cts} policy has an exponential constant term in its regret upper bound \citep{Wang2018,perrault2020statistical}, and \citet{Wang2018} proved in their Theorem~3 that this is unavoidable. 
A similar behavior have been demonstrated in \citet{zhang2021suboptimality}, where it is shown that  \textsc{cts} does
not scale polynomially in the ambient dimension $n$ in general. In addition, \citet{zhang2021suboptimality} also proved that  \textsc{cts} is not minimax optimal. Actually,
they even proved that in high dimensions, the minimax regret of \textsc{cts} is
almost linear in $T$.

\paragraph{The strengths of \textsc{cts}}
Despite the weaknesses mentioned above, \textsc{cts} remains a widely used policy, mainly because of its empirical performance. Indeed, \textsc{cts} generally outperforms other policies such as \textsc{cucb} and \textsc{escb} \citep{Wang2018, perrault2020statistical}. Moreover, it is relatively simple to implement, and is computationally efficient (just like \textsc{cucb}). On the theory side, another advantage is that for an exact oracle, \textsc{cts} is asymptotically quasi-optimal\footnote{This means that it has a distribution-dependent regret upper bound whose leading term in $T$ has an optimal rate, up to a poly-logarithmic factor in $n$.} for many settings where \textsc{cucb} is not, and where \textsc{escb} is computationally inefficient \citep{perrault2020statistical}. It would be desirable that these advantages also apply to the case of approximation regret, thus motivating our investigations.

\section{Model and definitions}\label{sec:model}

For more generality, we consider the \emph{probabilistically triggered arms} extension of CMAB \citep{Chen2015combinatorial,wang2017improving}, abbreviated to CMAB-T.
In this context, the action $A\in \cA$ selected is not necessarily equal to the triggered super-arm $S$. More precisely, the action
space $\cA$ is no longer necessarily a subset of $\cP([n])$ and can be  infinite. At round $t$, the agent selects $A_t\in \cA$, based on the history of observations $\cH_t\triangleq \sigma\pa{\bX_1\odot \be_{S_1},\dots,\bX_{t-1}\odot \be_{S_{t-1}}}$ and a possible extra source of randomness (we denote by $\cF_t$ the filtration containing $\cH_t$ and the extra randomness of round $t$ --- in particular, $A_t$ is $\cF_t$-measurable). Then, an independent sample $\bX_t\sim \Prb_{\bX},~\bX_t\in \R^n$ is drawn and a random subset $S_t\in \cS\subset \cP([n])$ of arms are triggered ($\cS$ is called \emph{super-arm space} or \emph{subset space}). We assume that $S_t$ is drawn independently from a distribution $D_{\text{trig}}\pa{A_t,\bX_t}$ and that the outcome of an arm does not depend on whether it is triggered. In addition, if we don't have $\Prb_{\bX}=\otimes_{i\in [n]}\Prb_{X_i}$, we assume that $D_{\text{trig}}$
doesn't depend on $\bX_t$. For the feedback, the outcome of each triggered arm is observed, i.e., $\be_{S_t}\odot\bX_t$ is observed. The expected reward is of the form $r(A_t,\bmu^*)$, where $r$ is a function defined on a domain $\cA\times \cM$, with $\cM\subset \R^n$. The objects $\cA,D_{\text{trig}},r$ are known to the agent.
We assume that $r(\cdot,\bmu^*)$ admits a maximum $r(A^*,\bmu^*)$ on $\cA$.
In the following, we give the definition of the probability that an arm $i\in [n]$ is triggered (and thus that a feedback from $i$ is obtained) by having played a certain action $A\in \cA$.
\begin{definition}[Triggering probabilities]\label{def:trig_prob}
The triggering probabilities are defined for all $i\in [n]$ and $A\in\cA$ as  \(p_i(A)\triangleq \PP{i\in S},\) where $S\sim D_{\text{trig}}\pa{A,\bX}$, $\bX \sim \Prb_{\bX}$.
\end{definition}
Under an $\alpha$-approximation $\ora$, we use the approximation regret to evaluate the performance of a policy $\pi$, defined as follows.
\begin{definition}[Approximation regret]\label{def:approx_regret}
The $T$-round $\alpha$-approximation regret of a learning policy $\pi$ that selects action $A_t\in \cA$ at round $t$ is defined as follows, where the approximation gap is defined as
\(\Delta_t=\Delta\pa{A_t}\triangleq 0\vee\pa{\alpha r(A^*,\bmu^*) - r(A_t,\bmu^*)},\)
with $A^*\in\argmax_{A\in \cA}r(A,\bmu^*)$. 
\[R_{T,\alpha}(\pi)\triangleq \EE{\sum_{t\in [T]}\Delta_t}.\]
\end{definition}
To approach the problem of minimizing $R_{T,\alpha}$, we consider the following standard assumptions \citep{wang2017improving}.

\begin{assumption}[Approximation oracle]\label{ass:approx} The agent has access to an $\ora$ such that for
any mean vector $\bmu\in \cM$,
\[r(\ora\pa{\bmu},\bmu)\geq \alpha r(A^*,\bmu).\]
\end{assumption}

\begin{assumption}[1-norm triggering probability modulated bounded smoothness]\label{ass:smooth} There exists $\bB\in \R_+^n$ such that for all $A\in \cA$, for all $\bmu,\bmu'\in \cM$, 
\[\abs{r(A,\bmu)-r(A,\bmu')}\leq\sum_{i\in [n]} p_i(A)B_i\abs{\mu_i-\mu'_i}. \]
\end{assumption}

\begin{assumption}[Sub-Gaussianity of the outcome distribution] \label{ass:subgau}
 $\Prb_{\bX}$ is such that $\forall \blambda\in\R^n$, \[\EE{e^{\blambda\transpose(\bX-\bmu^*)}}\leq e^{\norm{\blambda}_2^2/8}.\]
For example, $\Prb_{\bX}=\otimes_{i\in [n]}\Prb_{X_i}$, and $X_i\overset{a.s.}{\in} [0,1]$ (from Hoeffding’s Lemma \citep{hoeffding1963probability}).
\end{assumption}
\begin{definition}[Other definitions]
We define, for $i\in  [n]$, the minimal gap of an action containing $i$ as \[\Delta_{i,\min}\triangleq \inf_{A\in \cA:~p_i(A)>0,~\Delta(A)>0} \Delta(A).\] The minimal and maximal gaps are defined as \[\Delta_{\min}\triangleq\min_{i\in [n]}\Delta_{i,\min}\quad \text{and}\quad
\Delta_{\max}\triangleq\sup_{A\in \cA}\Delta(A).\] For $A\in \cA$, we let $\textsc{t}(A)\triangleq \set{i\in [n]: p_i(A)>0}$ be the set of arms that are triggerable by selecting action $A$. We finally define 
\[ m \triangleq\sup_{A\in \cA}\abs{\textsc{t}\pa{ A}},\quad
    m^*\triangleq \abs{\textsc{t}\pa{A^*}},\quad\text{and} \quad p^*\triangleq \inf_{i\in [n],~A\in \cA:~p_i(A)>0}p_i(A).
    \]
\end{definition}
\section{Combinatorial Thompson Sampling and exact oracle analysis}\label{sec:cts}
In this section, we present the \textsc{cts} policy, focusing on two versions, one working with a Beta prior, and the other with a Gaussian prior. Then, we present an 
 associated analysis for the exact oracle case (i.e., with $\alpha=1$).
\begin{algorithm}[t]
\begin{algorithmic}
\STATE \textbf{Initialization}: 
 For each arm $i$, let $\gamma_i=\delta_i=1$.
\STATE \textbf{For all} $t\geq 1$:
\STATE \quad Draw $\btheta_t\sim\otimes_{i\in [n]}\text{Beta}(\gamma_i,\delta_i)$. 
\STATE \quad Play $A_t=\ora\pa{\btheta_t}$.
\STATE \quad Get the observation $\bX_{t}\odot\be_{S_t}$, and draw $\bY_t\sim \otimes_{i\in S_t}\Bernoulli(X_{i,t})$.
\STATE \quad For all $i\in S_t$ update $\gamma_i\leftarrow \gamma_i+Y_{i,t}$ and $\delta_i\leftarrow \delta_i+1-Y_{i,t}$. 
\end{algorithmic}
\caption{\textsc{cts-beta}}\label{algo:tsbeta}
\end{algorithm}
\begin{algorithm}[t]
\begin{algorithmic}
\STATE \textbf{Input}: $\beta>1$.
\STATE \textbf{Initialization}: Play each arm once (if the agent knows that $\bmu^*\in [a,b]^n$, this might be skipped)
\STATE \textbf{For every subsequent round} $t$:
\STATE \quad Draw $\btheta_t\sim\otimes_{i\in [n]}\cN\pa{\bar\mu_{i,t-1},{ N^{-1}_{i,t-1}} \beta/4 }$ ($\theta_{i,t}\sim\cU[a,b]$ if $N_{i,t-1}=0$). \STATE \quad Play $A_t=\ora\pa{\btheta_t}$.
\STATE\quad Get the observations $\bX_{t}\odot\be_{S_t}$ and let $\bY_t=\bX_t$.
\STATE \quad Update $\bar\bmu_{t-1}$ and counters accordingly. 
\end{algorithmic}
\caption{\textsc{cts-gaussian}}\label{algo:tsgauss}
\end{algorithm}
\subsection{Algorithms}
Based on the above assumptions, we focus on two versions of \textsc{cts}. The first version is \textsc{cts-beta} \citep{Wang2018} (Algorithm~\ref{algo:tsbeta}), working when we assume furthermore that $\Prb_{\bX}=\otimes_{i\in [n]}\Prb_{X_i}$ and $\bX\overset{a.s.}{\in}[0,1]^n$ (note that this actually covers the case of bounded $\bX$, by adjusting the parameter $\bB$). For each arm $i\in [n]$, \textsc{cts-beta} maintains a Beta prior distribution with parameters $\gamma_i$ and $\delta_i$ (initialized to $1$). At each round $t$, for each arm $i$, the algorithm sample $\theta_{i,t}$ from the corresponding prior, representing the current estimate of $\mu_i^*$. Then the oracle outputs the action $A_t$ to play according to the input vector $\btheta_t$. Based on the observation feedback, the algorithm then updates
the corresponding Beta distributions.
The second version is \textsc{cts-gaussian}  \citep{perrault2020statistical} (Algorithm~\ref{algo:tsgauss}), that works under the more general Assumption~\ref{ass:subgau}. It is essentially the same as \textsc{cts-beta}, except that the prior distributions are Gaussian. 
For both Algorithm~\ref{algo:tsbeta} and Algorithm~\ref{algo:tsgauss}, and an arm $i\in [n]$, we define the number of time $i$ has been triggered at the beginning of round $t$, called counter of arm $i$, as 
 \[N_{i,t-1}\triangleq\sum_{t'\in[t-1]}\II{i\in S_{t'}}.\]
We also define the \emph{empirical mean} at the beginning of round $t$ as
\[\mean{i,t-1}\triangleq {\sum_{t'\in[t-1]}\frac{\II{i\in S_{t'}}Y_{i,t'}}{\counter{i,t-1}}}.\] 

\subsection{Analysis of \textsc{cts}: the $\alpha=1$ case}

Although this is close to some known results in the current literature \citep{huyuk2019analysis, perrault2020statistical}, there is no proof for the classical $\cO\pa{\log(m)\log(T)\sum_{i\in [n]} {B_i^2}/{\Delta_{i,\min}}}$ regret bound under the above assumptions in the CMAB-T setting, either for \textsc{cts-beta} (Algorithm~\ref{algo:tsbeta}) or \textsc{cts-gaussian} (Algorithm~\ref{algo:tsgauss}). We thus provide such a result in Theorem~\ref{thm:tsexact} (the proof is postponed to Appendix~\ref{app:tsexact}). We can notice a difference with the work of \citet{huyuk2019analysis} concerning the Assumption~\ref{ass:smooth}, where the triggering probabilities do not appear (and are present in the main term of their final regret bound). The main difference with \citet{perrault2020statistical} is that they do not consider probabilistically triggered arms. 
 \begin{theorem}\label{thm:tsexact}If $\Delta_{\min}>0$ and $p^*>0$, the policy $\pi$ described in
Algorithm~\ref{algo:tsbeta} (under Assumptions~\ref{ass:approx},~\ref{ass:smooth} and~$\Prb_{\bX}=\otimes_{i\in [n]}\Prb_{X_i}$, $\bX\overset{a.s.}{\in}[0,1]^n$) or Algorithm~\ref{algo:tsgauss} (under Assumptions~\ref{ass:approx},~\ref{ass:smooth}~and \ref{ass:subgau}) has a regret of order \[R_{T,1}(\pi)=\cO\pa{\sum_{i\in [n]} \frac{B_i^2\log(m)\log(T)}{\Delta_{i,\min}}}\cdot\]
 \end{theorem}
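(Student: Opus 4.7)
The plan is to combine the exact-oracle \textsc{cts} analyses of \citet{Wang2018,perrault2020statistical} with the triggering-probability-modulated (TPM) technique of \citet{wang2017improving,huyuk2019analysis}, so as to remove $1/p^*$ from the leading term while keeping the factor $\log(m)$. Fix a suboptimal round ($\Delta_t>0$) and write $A_t=\ora(\btheta_t)$. By exactness of $\ora$, $r(A_t,\btheta_t)\geq r(A^*,\btheta_t)$, hence
\[\Delta_t\leq \bigl(r(A^*,\bmu^*)-r(A^*,\btheta_t)\bigr)+\bigl(r(A_t,\btheta_t)-r(A_t,\bmu^*)\bigr)\leq \sum_{i\in[n]}\pa{p_i(A^*)+p_i(A_t)}B_i\abs{\theta_{i,t}-\mu_i^*},\]
by Assumption~\ref{ass:smooth}.

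Next I would introduce the canonical ``nice'' events: $\cE_t=\{\abs{\mean{i,t-1}-\mu_i^*}\leq\rho^{\text{conc}}(N_{i,t-1})\text{ for all }i\}$ and $\cF_t=\{\abs{\theta_{i,t}-\mean{i,t-1}}\leq\rho^{\text{samp}}(N_{i,t-1})\text{ for all }i\}$, both with radii of order $\sqrt{\log t/N_{i,t-1}}$. Under Assumption~\ref{ass:subgau}, a Chernoff bound gives $\PP{\cE_t^c}\lesssim 1/t^2$. For $\PP{\cF_t^c}$, one uses Beta tail bounds after the Bernoullization step $Y_{i,t}\sim\Bernoulli(X_{i,t})$ in Algorithm~\ref{algo:tsbeta} (as in \citet{agrawal2012thompsonarxiv}), or Gaussian tails with $\beta>1$ for Algorithm~\ref{algo:tsgauss}. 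Summed over $t$, these bad events contribute only $\cO(1)$ to the regret. On $\cE_t\cap\cF_t$ one has $\abs{\theta_{i,t}-\mu_i^*}=\tilde\cO(1/\sqrt{N_{i,t-1}})$, so the per-round regret is at most $\sum_i\pa{p_i(A^*)+p_i(A_t)}B_i\sqrt{\log t/N_{i,t-1}}$.

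I would then apply the TPM reverse amortization of \citet[Lemma~2]{wang2017improving} together with the dyadic pigeonhole of \citet{kveton2015tight}: the TPM step converts $\sum_t p_i(A_t)/\sqrt{N_{i,t-1}}$ into a quantity controllable by the triggering counter, thereby eliminating $1/p^*$ from the leading term, while the pigeonhole distributes the $\Delta_t$ mass over triggered arms at a $\log m$ cost, producing the per-arm contribution $\tilde\cO(B_i^2\log(m)\log(T)/\Delta_{i,\min})$. The $p_i(A^*)$ tail is absorbed into the same bound since $A^*$ is fixed and arms of $\textsc{t}(A^*)$ are triggered either when $A^*$ itself is played or when $i$ also lies in $\textsc{t}(A_t)$, which reduces to the already-treated case.

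The most delicate ingredient, and the one genuinely specific to \textsc{cts}, is controlling the rounds where the posterior on some arm of $A^*$ is anti-concentrated enough that $\ora(\btheta_t)$ picks a bad action even though empirical means have already concentrated. Following \citet[Lemma~5]{Wang2018} and its CMAB-T extension in \citet{huyuk2019analysis}, I would designate a ``bottleneck'' arm whose posterior has at least constant probability of making $r(A^*,\btheta_t)$ exceed the reward of any fixed suboptimal $A$, so that the number of such rounds is geometric in expectation. The main obstacle is stitching this action-level anti-concentration counter to the arm-level TPM amortization without reintroducing a $1/p^*$ factor in the leading term, and keeping the Beta and Gaussian prior analyses aligned enough that the same argument delivers both bounds up to constants.
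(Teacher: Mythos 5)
Your overall architecture matches the paper's: concentration events for the empirical means and for the Thompson samples, a TPM-style amortization using $p_i(A_t)=\EEc{\II{i\in S_t}}{\cF_t}$ plus a dyadic pigeonhole to get the $\log(m)$ factor, and a separate posterior anti-concentration argument. However, there is a genuine gap in your opening decomposition. You bound $\Delta_t\leq \sum_{i}\pa{p_i(A^*)+p_i(A_t)}B_i\abs{\theta_{i,t}-\mu_i^*}$ and then claim the $p_i(A^*)$ part ``is absorbed into the same bound since $A^*$ is fixed and arms of $\textsc{t}(A^*)$ are triggered either when $A^*$ itself is played or when $i$ also lies in $\textsc{t}(A_t)$.'' This is false: an arm $i\in\textsc{t}(A^*)$ with $p_i(A_t)=0$ for all subsequent rounds is never triggered again, its posterior is frozen, and $\abs{\theta_{i,t}-\mu_i^*}$ need not shrink; there is no identity converting $\sum_t p_i(A^*)f(N_{i,t-1})$ into a telescoping sum, because the increments of $N_{i,t-1}$ are governed by $p_i(A_t)$, not $p_i(A^*)$. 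This is exactly the obstruction that makes \textsc{cts} harder than \textsc{cucb}, where optimism kills the $A^*$-side term outright.

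The paper resolves this differently: the main (amortizable) term is defined through the event $\fC_t=\set{\norm{\bp(A_t)\odot\bB\odot(\btheta_t-\bmu^*)}_1>\Delta_t-B(m^{*2}+1)\eps}$, which involves \emph{only} the triggered probabilities $p_i(A_t)$, and the rounds where $\Delta_t>0$ yet $\fC_t$ fails (i.e., where the loss is attributable to the sample underestimating $A^*$ rather than to poorly estimated triggered arms) are shown to number only a $T$-independent constant in expectation. That step is not a bolt-on: it requires the recursive subset argument over $Z_1\supset Z_2\supset\cdots\subset\textsc{t}(A^*)$ (Lemma~\ref{lem:fSfT+}) combined with bounds on $\EE{1/\PPc{\abs{\theta_{i,\tau}-\mu_i^*}\leq\eps}{\cH_\tau}}$ for the Beta and Gaussian posteriors (Lemma~\ref{lem:fSfT++}), yielding the $(c'/\eps^4)^{m^*}/p^*$ constant. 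You correctly name this ingredient at the end and flag the ``stitching'' as the main obstacle, but in your plan it sits alongside an $A^*$-side term that you have already (incorrectly) claimed to amortize; the anti-concentration argument must \emph{replace} that term, not supplement it. As written, the proposal would fail at the absorption step.
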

In addition to being a new result in itself, Theorem~\ref{thm:tsexact} will be useful for the $\alpha<1$ case.
Concerning the bound, after changing the CMAB setting to CMAB-T, it should be noticed that the $T$-independent additive constant depends on $1/p^*$, the cause being the use of the Lipschitz condition (Assumption~\ref{ass:smooth}) without weighting by probabilities. We think that this dependence should be avoidable, but it seems that another technique has to be considered. Finally, we remark that this kind of bound can be usually transformed into a gap-independent $\sqrt{T}$ bound \citep{chen13a}, however, for \textsc{cts}, achieving this transformation is impossible since
\textsc{cts} is not minimax optimal (as we mentioned in the paragraph "Other known limitations of CTS"). 

\section{The $\alpha<1$ case for \textsc{reduce2exact} problems}\label{sec:cond}
We will now look at the $\alpha<1$ case. Our strategy is based on the following observation: many approximation algorithms involve a relaxation, or a reduction to one or more problems that can be solved exactly (in this paper, we use the terminology sub-problems to refer to them). The approximation guarantee for such an approximation algorithm is thus obtained by linking the original problem to those sub-problems. 
We give a simple abstract example to illustrate this idea. Let's say we want to maximize a function $f$ on some set $F$, using an $\alpha$-approximation algorithm. 
Assume there exist two other functions $g$ and $h$ defined on sets $G$ and $H$ respectively, such that $G\times H\subset F$ and such that $g$ and $h$ can be maximized exactly on $G$ and $H$ respectively. Finally, assume that for all $(x,y)\in G\times H$, 
\begin{align}\alpha \max_{F}f-f(x,y)\leq \max_{G}g-g(x) + \max_{H}h-h(y).\label{rel:illua1}\end{align}
This means that an $\alpha$-approximation algorithm to maximize $f$ can simply output the feasible solution $(\argmax_Gg,\argmax_Hh)\in F$. This example may seem very basic and artifactual at first glance, but it turns out that many approximation algorithms rely on the same principle, as we will see in subsection~\ref{subsec:ex}.
To see how this can be exploited for approximation regret minimization, we can notice that the LHS of \eqref{rel:illua1} is an approximation gap (as defined in Definition~\ref{def:approx_regret}, taking the max with 0 and considering that the choice $(x,y)\in F$ is that of a policy at a given round, with $F$ playing the role of the action space) and that the RHS is the sum of two gaps, with $G$ (respectively $H$) playing the role of the action space, but this time without approximation factor. Summing over the rounds, we find that the corresponding approximation regret is bounded by the sum of two "classical" regrets (in this paper, we use the terminology sub-regrets). Thus, the bounds we obtain on these two sub-regrets using \textsc{cts} with the corresponding exact oracles translate into an bound on the approximation regret. 

\paragraph{Avoiding the "mismatch" phenomenon}
\citet{kong2021hardness} identified the reason why their regret bound has a $\Delta^2$ in the denominator, while the usual CMAB algorithms only have a $\Delta$. They term it a "mismatch" between the estimated
gaps that need to be eliminated by exploration and the actual regret the algorithm needs to pay. We argue that, in fact, this mismatch phenomenon should exist in principle, even for non-approximation regret. It is usually avoided using a smoothness assumption like Assumption~\ref{ass:smooth}, linking our estimations (here the arms that generate the feedback) and what is paid. In the above example, although we are in an approximation context, the situation is fundamentally no different. We see that \eqref{rel:illua1} links the paid approximation regret with two exact sub-regrets which are themselves assumed to be related to our estimates through a smoothness-like property. 
There is thus an indirect link between the approximation regret and the outcomes, which enables one of the $\Delta$ present in the denominator of the exploration term ${\log(T)}/\Delta^2$ to be cancelled out by the actual regret paid, thus avoiding the mismatch phenomenon.

To summarize, just as the approximation relation is obtained by linking the original problem to sub-problems that can be solved exactly, the idea behind \textsc{reduce2exact} problems is to link the approximation regret to several sub-regrets, each satisfying the appropriate properties for \textsc{cts}, namely a smoothness relationship and the availability of an exact oracle.
 We formalize this in the following assumption.
\begin{assumption}[\textsc{reduce2exact}]
\label{reduce2exact}
There exist $\ell\in \N^*$, $\bc\in \R_+^\ell$ and $\bB_j\in \R_+^n$ for all $j\in [\ell]$ such that the following is true.
 $\ora$ is of the form $\ora=\ora_2\circ\ora_1$, where $\ora_1$ and $\ora_2$ are described as follows.
 \begin{itemize}
     \item $\ora_1:$
     For $\bmu\in \cM$, $\ora_1(\bmu)$ must output a sequence $(E_1,\dots,E_{\ell})$ described as follows. For each $j\in [\ell]$, let
$\cE_j=\cE_j\pa{E_1,\dots,E_{j-1}}$ be a sub-action space which may depend on $E_1,\dots,E_{j-1}$ and let $r_j(\cdot,\bmu):\cE_j\to \R$ be a reward sub-function.
Then, we require that $E_j\in \argmax_{E\in \cE_j}r_j(E,\bmu)$.
\item $\ora_2:$ For an input $E_1\in \cE_1,\dots,E_\ell\in \cE_\ell$, $\ora_2\pa{E_1,\dots,E_\ell}$ must output an action in $\cA$ such that:
\begin{align}\Delta\pa{\ora_2\pa{E_1,\dots,E_\ell}}\leq
\sum_{j\in [\ell]}{ \pa{r_j\pa{E_j^*,\bmu^*}-r_j\pa{E_j,\bmu^*}}\cdot c_j},
\label{eq:ora2}\end{align}
where for all $j\in [\ell]$, $E_j^*\in \argmax_{E\in \cE_j}r_j(E,\bmu^*)$.
 \item Finally, in addition to the above constraints on $\ora_1$ and $\ora_2$, for each $j\in [\ell]$, we require that the reward sub-function $r_j$ satisfies Assumption~\ref{ass:smooth} with the constants $ \bB_j$ and with the triggering probabilities
\( p_i\pa{\ora_2\pa{E_1,\dots,E_\ell}},~i\in [n].\)
 \end{itemize}
\end{assumption}
 Informally, in the above Assumption~\ref{reduce2exact}, for $\bmu\in \cM$,
$\ora_1(\bmu)$ exactly solves a finite sequence of recursively defined optimization sub-problems and $\ora_2$ builds an action in $\cA$ for the original approximation problem using the intermediate solutions provided by $\ora_1$.
At first sight, Assumption~\ref{reduce2exact} seems very specific and rather difficult to fulfill. However, we will see in subsection~\ref{subsec:ex} that many concrete problems satisfy it.

\subsection{Analysis}
In this subsection, we give in Theorem~\ref{thm:tsapprox} the main result of this paper. It basically states that under Assumptions~\ref{reduce2exact}, the \textsc{cts} policy have a regret bound comparable to the exact oracle case. The proof is postponed to Appendix~\ref{app:tsapprox}.

 \begin{theorem}\label{thm:tsapprox}If $\Delta_{\min}>0$ and $p^*>0$, the policy $\pi$ described in
 Algorithm~\ref{algo:tsbeta} (under Assumption~\ref{reduce2exact} and~$\Prb_{\bX}=\otimes_{i\in [n]}\Prb_{X_i}$, $\bX\overset{a.s.}{\in}[0,1]^n$) or Algorithm~\ref{algo:tsgauss} (under Assumptions~\ref{ass:subgau} and~\ref{reduce2exact}) has regret of order \[R_{T,\alpha}(\pi)=\cO\pa{\sum_{i\in [n]}\frac{ \pa{ \sum_{j\in [\ell]} B_{ij}c_j }^2\log(m)\log(T)}{\Delta_{i,\min}}}.\]
 \end{theorem}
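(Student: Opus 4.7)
I would reduce the approximation regret analysis to the exact-oracle analysis of Theorem~\ref{thm:tsexact}, by showing that the \textsc{reduce2exact} structure produces, round by round, an upper bound on the approximation gap $\Delta(A_t)$ of exactly the same shape as the one obtained in the exact case, but with the per-arm smoothness constant $B_i$ replaced by its effective counterpart $\tilde B_i\triangleq \sum_{j\in[\ell]} c_j B_{ij}$. Once this pointwise reduction is established, the proof of Theorem~\ref{thm:tsexact} can be re-run verbatim and yields precisely the announced bound.

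\textbf{Per-round gap decomposition.} Let $(E_{1,t},\dots,E_{\ell,t})=\ora_1(\btheta_t)$ and $A_t=\ora_2(E_{1,t},\dots,E_{\ell,t})$, and denote by $E_{j,t}^*\in\argmax_{E\in\cE_j(E_{1,t},\dots,E_{j-1,t})} r_j(E,\bmu^*)$ the sub-problem optimum at $\bmu^*$. The bound \eqref{eq:ora2} gives
\[\Delta(A_t)\ \leq\ \sum_{j\in[\ell]} c_j \bigl(r_j(E_{j,t}^*,\bmu^*)-r_j(E_{j,t},\bmu^*)\bigr).\]
The key leverage is that $E_{j,t}$ is the \emph{exact} maximizer of $r_j(\cdot,\btheta_t)$ on the same sub-action space as $E_{j,t}^*$, so $r_j(E_{j,t}^*,\btheta_t)\leq r_j(E_{j,t},\btheta_t)$. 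Inserting $\pm r_j(E_{j,t}^*,\btheta_t)\pm r_j(E_{j,t},\btheta_t)$ and controlling each resulting term with the smoothness supplied by Assumption~\ref{reduce2exact} (constants $\bB_j$, triggering probabilities $p_i(A_t)$) yields
\[r_j(E_{j,t}^*,\bmu^*)-r_j(E_{j,t},\bmu^*)\ \leq\ 2\sum_{i\in[n]} p_i(A_t)B_{ij}\abs{\theta_{i,t}-\mu_i^*}.\]
Weighting by $c_j$ and summing over $j$ gives the clean inequality
\[\Delta(A_t)\ \leq\ 2\sum_{i\in[n]} p_i(A_t)\tilde B_i\abs{\theta_{i,t}-\mu_i^*},\qquad \tilde B_i=\sum_{j\in[\ell]} c_j B_{ij}.\]

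\textbf{Reduction to the exact-oracle analysis.} This inequality has exactly the shape that the proof of Theorem~\ref{thm:tsexact} produces from the conjunction \emph{(exact oracle + Assumption~\ref{ass:smooth} with constants $B_i$)}, except with $B_i$ replaced by $\tilde B_i$. Since the remainder of the exact analysis — posterior concentration of the Beta/Gaussian priors, decomposition into good/bad rounds for $\btheta_t$, and the counting argument that turns $\log(T)/\Delta_{i,\min}^2$ triggerings into an $\cO(\log(T)/\Delta_{i,\min})$ contribution per arm — depends only on the posterior and on the sub-Gaussianity/bounded-support of $\bX$ and not on the internals of $\ora$, it applies verbatim with $B_i\leftarrow \tilde B_i$ and with the original gaps $\Delta_{i,\min}$ (the definitional property $p_i(A_t)>0 \wedge \Delta(A_t)>0 \imp \Delta(A_t)\geq \Delta_{i,\min}$ is unchanged). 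This delivers the stated $\cO\bigl(\sum_i \tilde B_i^2 \log(m)\log(T)/\Delta_{i,\min}\bigr)$ bound.

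\textbf{Main obstacle.} The delicate point is invoking the smoothness of $r_j$ not only at $E_{j,t}$, whose natural context is the played sequence $(E_{1,t},\dots,E_{\ell,t})$ with triggering probabilities $p_i(A_t)$, but also at $E_{j,t}^*$, whose natural context would replace the $j$-th coordinate and could a priori induce different triggering probabilities $p_i\bigl(\ora_2(E_{1,t},\dots,E_{j,t}^*,\dots,E_{\ell,t})\bigr)$. I would handle this by reading Assumption~\ref{reduce2exact} as requiring the smoothness bound to hold uniformly with $p_i(A_t)$ for \emph{every} $E$ in the relevant sub-action space $\cE_j(E_{1,t},\dots,E_{j-1,t})$ (which is the reading that matches the subsequent examples of the paper), so that both terms in the $\pm$-decomposition above are controlled by the same $p_i(A_t)B_{ij}$. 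All other steps of the proof are mechanical adaptations of the exact-oracle argument.
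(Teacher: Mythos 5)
Your reduction hinges on the ``clean inequality'' $\Delta(A_t)\leq 2\sum_i p_i(A_t)\tilde B_i\abs{\theta_{i,t}-\mu_i^*}$ holding \emph{unconditionally}, which in turn requires applying the smoothness of $r_j$ at the counterfactual optimizer $E_{j,t}^*$ with the triggering probabilities $p_i(A_t)$ of the \emph{played} action. You flag this as the delicate point and resolve it by decreeing that Assumption~\ref{reduce2exact} holds uniformly in that sense, but that reading is not available: it already fails for the paper's flagship PMC example, where $r_j(E_{j,t}^*,\cdot)$ depends on the arms (edges) adjacent to the optimal greedy choice $a_j^*$, which need not belong to $A_t$ and hence have $p_i(A_t)=0$. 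This is the same phenomenon as in the exact-oracle case, where smoothness at $A^*$ involves $p_i(A^*)$ rather than $p_i(A_t)$, and it is precisely why the event $\fC_t$ is \emph{not} automatic and why Step~1 of the proof of Theorem~\ref{thm:tsexact} (Lemmas~\ref{lem:fSfT+}--\ref{lem:fSfT++}, the exploration/posterior-concentration argument that crucially exploits exactness of the oracle) exists. Your claim that the ``decomposition into good/bad rounds for $\btheta_t$'' is independent of the internals of $\ora$ is therefore wrong: without that step one cannot exclude that $\btheta_t$ stays far from $\bmu^*$ on the arms relevant to $E_{j,t}^*$ for linearly many rounds.

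The paper's proof fills exactly this hole: it observes that the sub-policy $t\mapsto E_{j,t}$ is itself \textsc{cts} with an \emph{exact} oracle for the sub-reward $r_j$, and re-runs Steps~1--3 of the exact analysis for each $j$ separately, conditioning on the nested filtration $\cG_{j-1}=\sigma(\cH_t,E_{1,t},\dots,E_{j-1,t})$ to handle the fact that $\cE_j$ (hence $E_{j,t}^*$) is random. It also replaces the sub-problem gap threshold by $\Delta_{\min}/(2\sum_j c_j)$ so that, on the residual event, either a sub-gap is below that threshold (and can be absorbed into $\Delta_t/2$) or the sub-problem is in the Step-4 situation, after which the allocation Lemma~\ref{lem:log(m)} with effective constants $\sum_j B_{ij}c_j$ and the \emph{original} gaps gives the bound. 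Your endgame (effective constants $\tilde B_i$, original $\Delta_{i,\min}$, Lemma~\ref{lem:log(m)}) coincides with the paper's, but the per-sub-problem exploration argument and the filtration bookkeeping are the substance of the proof, and they are missing from your proposal.
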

 
 The idea of the proof is quite simple once Assumption~\ref{reduce2exact} has been made. We can see that the approximation regret can be decomposed into $\ell$ sub-regrets, according to equation~\eqref{eq:ora2}. 
 Then, one must focus on the fact that the sub-regrets may dependent on each other and that the right gap (defined with the original reward function) must be obtained in the denominator of the final bound.

\subsection{Examples of \textsc{reduce2exact} problems}\label{subsec:ex}
Here, we present several problems belonging to \textsc{reduce2exact}. Each time, after a quick introduction of the problem, we translate it into our CMAB-T context, and finally show how it satisfies the \textsc{reduce2exact} criteria. The Travelling salesman problem (TSP) is treated in Appendix~\ref{app:tsp}.


\paragraph{Submodular maximization (e.g., probabilistic maximum coverage (PMC))} Here, we only expose the PMC example, noting that the same derivation can be applied to a monotone submodular\footnote{$f$ is monotone if for every $A\subset B$, we have $f(A)\leq f(B)$. It is submodular
if for every $A,B$ we have that
$f(A\cup B) + f(A\cap B) \leq f(A)+f(B)$.} function.
PMC is one of the main examples proposed by \citet{kong2021hardness}. Given a weighted bipartite graph $G=(L,R,E)$, with weights $\bmu^*\triangleq (\mu^*_{(u,v)})_{(u,v)\in E}$ (notice there are thus $n=\abs{E}$ arms, recalling that for us 
an arm is only something that produces an outcome, not something we can choose as an action, explaining why arms are not indexed by vertices here), the goal is to find an action $A\in \cA\triangleq \set{A\subset L:~\abs{A}=k}$, $k\in \N^*$, maximizing the expected number of influenced nodes in $R$, where each node
$v \in R$ can be independently influenced by $u\in A$ with probability $\mu^*_{(u,v)}$, i.e., maximizing
\(f(A,\bmu^*)\triangleq\sum_{v\in R}\pa{1-\prod_{u\in A:~(u,v)\in E} (1-\mu^*_{(u,v)})}.\)
This problem can be applied to the semi-bandit framework called \emph{the ad placement problem}, where $L$ are the web pages, $R$ are the users and $\mu^*_{(u,v)}$ is the probability that user $v$ clicks on the ad on web page $u$. In this application, the user's click probabilities are unknown and must be learned as the rounds progress. The Greedy oracle can provide an approximate solution with approximation ratio $\alpha = 1 - 1/e$ \citep{nemhauser1978analysis}. This setting fits \textsc{reduce2exact} as follows:
\begin{itemize}
    \item  $\ora_1(\bmu):$
    For $i\in [k]$, let
$\cE_i\triangleq\set{ (a_1,\dots,a_i): (a_1,\dots,a_{i-1})=E_{i-1}, a_i\in L\backslash E_{i-1} }$ and $r_i((a_1,\dots,a_i), \bmu)\triangleq f(\set{a_1,\dots,a_i},\bmu)$.
\item $\ora_2\pa{E_1,\dots,E_k}:$ Let $(a_1,\dots,a_k)\triangleq E_k$.
 $\ora_2$ returns $A=\set{a_1,\dots,a_k}$. 
\end{itemize}
 Let $A^i=\set{a_1,\dots,a_i}$ for some $i\in [k]$ and by abuse of notation, let $f = f(\cdot,\bmu^*)$.
 Informally, we see that in the above decomposition of the oracle, at each step, $\ora_1$ maximizes $f(A^i)$ with $A^{i-1}$ fixed, i.e., $\ora_1$ optimizes only on $a_i$ (we thus recover the greedy algorithm). It is precisely these sub-problems of finding $a_i$ that can be solved exactly. Then, we see that $\ora_2$ simply returns the last $A^i$ constructed. We will now prove the relation \eqref{eq:ora2}. 
The following is true using that $f$ is monotone submodular (this is actually the way \citet{nemhauser1978analysis} proved the approximation guarantee, and is true for any monotone submodular function):
\begin{align*}
    f(A^*)-f(A^i)
    &\leq \sum_{a\in A^*\bsl A^i} \pa{f(\set{a} \cup A^i)-f(A^{i+1})} + k\pa{f(A^{i+1})-f(A^i)}.
\end{align*}
Once the above relation is obtained, we can actually continue the original proof from \citet{nemhauser1978analysis}, skipping each step where we would need to use the property of $\ora_1$, thus leaving a term in the right-hand side for each time we skipped.
\begin{align*}
    &{f(A^*)-f(A^k)}={f(A^*)-f(A^{k-1})} - \pa{f(A^{k})-f(A^{k-1})}\\
    &\leq \pa{f(A^*)-f(A^{k-1})}\pa{1-\frac{1}{k}} + \frac{1}{k}\sum_{a\in A^*\bsl A^{k-1}} \pa{f(\set{a} \cup A^{k-1})-f(A^{k})}
    \\\dots&\leq f(A^*)\pa{1-\frac{1}{k}}^k + \sum_{i=1}^{k} \frac{\pa{1-\frac{1}{k}}^{i-1}}{k}\sum_{a\in A^*\bsl A^{k-i}} \pa{f(\set{a} \cup A^{k-i})-f(A^{k-i+1})}.
\end{align*}
Finally, since $\pa{1-\frac{1}{k}}^k\leq e^{-1}$, we get that $\Delta\pa{\ora_2\pa{E_1,\dots,E_k}} = (1-e^{-1})f(A^*)-f(A)$ is bounded by
\[\sum_{i=1}^{k} \frac{\pa{1-\frac{1}{k}}^{i-1}}{k}\abs{A^*\bsl A^{k-i}} \pa{r_{k-i+1}(E_{k-i+1}^*,\bmu^*)-r_{k-i+1}(E_{k-i+1},\bmu^*)}.\]
Note in passing that we recover the classical approximation if the right-hand-side was equal to$~0$. It is easy to see that the reward function $r_j$ satisfies Assumption~\ref{ass:smooth} with $\bB_j=\be_{[n]}$. We thus finally get our Assumption~\ref{reduce2exact}.
\paragraph{Online influence maximization (OIM)}
As the analysis mentioned above only uses submodularity, it can be extended to the problem of online influence maximization in a social network. A social network
is modeled as a directed graph $G = (V, E)$, with nodes $V$ representing users and edges $E$ representing connections.
For a node $i\in V$, a subset $A\subset V$, and a vector $\bx\in \set{0,1}^E$, let the predicate $A\reach{\bx} i$ hold if, in the graph defined by $G_\bx\triangleq\pa{V,\set{ij\in E, x_{ij}=1}}$, there is a forward path
from a node in $A$ to the node~$i$. If it holds, we say that $i$ is influenced by $A$ under~$\bx$. The goal is to find an action $A\in \cA\triangleq \set{A\subset V:~\abs{A}=k}$ maximizing the \emph{influence spread} $\sigma\pa{A,\bmu^*}\triangleq \EE{\abs{\set{i\in V,~A\reach{\bX} i}}},$ where $\bX\sim\otimes_{(u,v)\in E}\Bernoulli(\mu^*_{(u,v)})$. This model is called the \emph{independent cascade model} \citep{kempe2003maximizing,kempe2015maximizing}. A notable property to use the greedy oracle is that $\sigma$ is monotone submodular.
As the exact calculation of $\sigma$ is prohibitive, it is estimated by simulating the diffusion process, resulting in an approximation factor  $\alpha = {1-{e^{-1}}-\varepsilon}$ in the above greedy oracle analysis \citep{kempe2015maximizing,feige1998threshold,chen2010scalable}, with $\varepsilon>0$. In OIM, arms may be probabilistically triggered, and Assumption~\ref{ass:smooth} holds with the constants being all equal to $\max_{u\in V}\abs{\set{v\in V,~\set{u}\reach{\pa{\II{\mu^*_e>0}}_{e\in E}} v}}$, which is the largest number
of nodes any node can reach \citep{wang2017improving}. As previously, we thus get Assumption~\ref{reduce2exact} with $\bB_j$ being $\be_{[n]}$ times this constant.

\paragraph{Metric $k$-center}
This example and the following ones are less common for CMAB, but allow to well illustrate \textsc{reduce2exact}. Given a set of cities, one wants to build $k$ warehouses in different cities and minimize the maximum distance of a city to a warehouse. Formally, given a complete undirected weighted graph $G=(V,E)$ whose distances $d(v_i,v_j)$ satisfy the triangle inequality, the goal is to find an action $A\in \cA\triangleq\set{A\subset V:~\abs{A}=k}$ that minimizes $\max_{v\in V}d(v, A)$. We can consider the semi-bandit setting where
the set of base arms is $E$, $\bmu^*\triangleq \pa{d(v_i,v_j)}_{(v_i,v_j)\in E}$ and the feedback set $S$ includes the edges of the graph induced by the chosen action. We can target an approximation regret with $\alpha=1/2$ using the following oracle (which is simply the standard greedy algorithm for this problem).
\begin{itemize}
    \item $\ora_1(\bmu):$
    For $i\in [k]$, let
$\cE_i\triangleq\set{ (a_1,\dots,a_i): (a_1,\dots,a_{i-1})=E_{i-1}, a_i\in V\backslash E_{i-1} }$ and $r_i((a_1,\dots,a_i), \bmu)\triangleq \min_{j\in [i-1]} \mu_{a_i,a_j}$ (with $r_1=0$). One can notice we thus have 
$E_i^*\in \argmax_{(a_1,\dots,a_i)\in \cE_i} d(a_i,\set{a_1,\dots,a_{i-1}})=\argmax_{(a_1,\dots,a_i)\in \cE_i} d(a_i,E_{i-1}).$
\item $\ora_2\pa{E_1,\dots,E_k}:$ Let $(a_1,\dots,a_k)\triangleq E_k$.
 $\ora_2$ returns $A=\set{a_1,\dots,a_k}$.
Let $w\in \argmax_{v\in V} d(v,A)$.
\end{itemize}
We thus have:
\begin{align*}
    \frac{1}{2}\max_{v\in V}d(v,A) - \max_{v\in V}d(v,A^*) &\leq  \frac{1}{2}\pa{\max_{v\in V}d(v,A) - \min_{a\in A\cup\set{w}}\min_{a'\in A\backslash\set{a}}d(a,a')}\\
    &= \frac{1}{2}\max_{j\in [k-1]} \pa{\max_{v\in V}d(v,A) -  d(a_{j+1},E_j)}\\
    &\leq  \frac{1}{2}\max_{j\in [k-1]} \pa{\max_{v\in V}d(v,E_j) -  d(a_{j+1},E_j)}\\
    &= \frac{1}{2}\max_{j\in [k-1]} \pa{r_{j+1}\pa{E_{j+1}^*,\bmu^*} -  r_{j+1}\pa{E_{j+1},\bmu^*}}.
\end{align*}
Where the first inequality is deduced as follows: the map $f:~v\mapsto \argmin_{a^*\in A^*} d(a^*,v)$ defines a partition of $V$ into $k=\abs{A^*}$ clusters. By the the pigeonhole principle, one cluster contains 2 different points $a,a'\in A\cup\set{w}$ (simply because its size is $k+1$). We can assume $a'\in A$ without loss of generality. Thus,
since $f(a)=f(a')$, we get
 $d(a,a')\leq d(a,f(a))+d(a',f(a'))=d(a,A^*)+d(a',A^*)\leq 2\max_{v\in V}d(v,A^*)$. 

\paragraph{Vertex cover}
The problem consists, given an undirected graph $G=(V,E)$, in finding a set of vertices with minimal cost to cover all the edges of $E$. Formally, with $\bmu^*\in \R_+^{V}$, the goal is to find an action $A\in \cA\triangleq \set{A\subset V: \forall (u,v)\in E, u\in A\text{ or }v\in A}$ that minimizes $\be_{A}\transpose \bmu^* $. The semi-bandit feedback is defined directly as $S=A$. We can target an approximation regret with $\alpha=1/2$ using the following linear programming (LP) relaxation oracle.
\begin{itemize}
    \item $\ora_1(\bmu):$
    Let\footnote{The LP relaxation of vertex cover is half-integral, so that we can allow each variable to be in $\set{0,1/2,1}$ rather than the interval from $0$ to $1$.}
$\cE_1\triangleq\set{ \bx\in \set{0,1/2,1}^V: \forall (u,v)\in E,~ x_u+x_v\geq 1}$ and $r_1(\bx, \bmu)\triangleq -\bx\transpose\bmu.$
\item $\ora_2\pa{E_1}:$ Let $\bx\triangleq E_1$.
 $\ora_2$ returns $A=\set{v\in V:x_v\geq 1/2}\in \cA$ (since $\forall~(u,v)\in E,x_u+x_v\geq 1$, so $x_u\geq 1/2$ or $x_v\geq 1/2$, so $u\in A$ or $v\in A$). Let $\bx^*\triangleq E_1^*$.
\end{itemize}
We have:
\begin{align*}
    \frac{1}{2}\be_A\transpose\bmu^* - \be_{A^*}\transpose\bmu^* \leq  \bx\transpose\bmu^* - \be_{A^*}\transpose\bmu^*
    \leq \bx\transpose\bmu^* - {\bx^*}\transpose\bmu^*
    = r_1(E_1^*,\bmu^*) - r_1(E_1,\bmu^*).
\end{align*}
Notice, $r_1$ satisfies
Assumption~\ref{ass:smooth}
with the constants being $1$, using $\bx\in \set{0,1/2,1}^V,$ so that $\bx\leq \be_A$.

\paragraph{Max-Cut} Given an undirected weighted graph $G=(V,E)$, with weights $\bmu^*\triangleq (\mu^*_{(u,v)})_{(u,v)\in E}$, the goal is to find an action $A\in \cA\triangleq \set{A\subset V}$ maximizing the total weight of the edges between $A$ and its complement, i.e., $\frac{1}{2}\sum_{(u,v)\in E} \mu^*_{(u,v)}\pa{1-y_uy_v} $, where $\by\triangleq\be_A-\be_{V\backslash A}$. We consider the semi-bandit context where $E$ is the set of arms, and the feedback set includes the edges between $A$ and its complement, i.e., $S=\set{(u,v)\in E,~y_uy_v=-1}$. To include randomization within the oracle, we can extend the action space $\cA$ to the set of probability measures on $\set{A\subset V}$, replacing $\Delta(A)$ by its expectation on $A$, as a function of the distribution of $A$. 
The polynomial-time approximation algorithm for Max-Cut with the best known approximation ratio \citep{goemans1995improved} uses semidefinite programming and randomized rounding, and achieves an approximation ratio
of
$\alpha ={\frac {2}{\pi }}\min _{{0\leq \theta \leq \pi }}{\frac {\theta }{1-\cos \theta }} \approx 0.878$. In our context, it can be defined as follows.
\begin{itemize}
    \item $\ora_1(\bmu):$
    We define
$\cE_1\triangleq\set{ (\bv_u)_{u\in V}\in \set{\bv\in \R^V: \norm{\bv}_2=1}^V}$ and $r_1((\bv_u)_{u\in V}, \bmu)\triangleq \frac{1}{2}\sum_{(u,v)\in E} \mu_{(u,v)}\pa{1-\bv_u\transpose\bv_v} $.
\item $\ora_2\pa{E_1}:$ Let $(\bv_u)_{u\in V}\triangleq E_1$.  $\ora_2$ returns the distribution of $A=\set{u\in V,~\bv_u\transpose\bZ\geq 0}$, where $\bZ\sim\cU\pa{\set{\bv\in \R^V: \norm{\bv}_2=1}}$.
\end{itemize}
The following is proved by \citet{goemans1995improved}:
\begin{align*}
\sum_{(u,v)\in E} \mu_{(u,v)}\PP{(u,v)\in S}&=\EE{\sum_{(u,v)\in E} \mu_{(u,v)}\frac{1-\sign\pa{\bv_u\transpose \bZ}\sign\pa{\bv_v\transpose \bZ}}{2}}
\\&= {\sum_{(u,v)\in E} \mu_{(u,v)}\frac{\arccos\pa{\bv_u\transpose\bv_v}}{\pi}}\\
&\geq  \alpha{\sum_{(u,v)\in E} \mu_{(u,v)}\frac{1-{\bv_u\transpose \bv_v}}{2}}=\alpha r_1((\bv_u)_{u\in V}, \bmu).\end{align*}
Thus, $r_1$ satisfies Assumption~\ref{ass:smooth} with the constants being $1/\alpha$. We also get, with $\by^*\triangleq\be_{A^*}-\be_{V\backslash A^*}$: 
\begin{align*}
\Delta\pa{\ora_2\pa{E_1}}&={\alpha}\sum_{(u,v)\in E} \mu^*_{(u,v)}\frac{1-y_u^*y_v^*}{2} - \EE{\sum_{(u,v)\in E} \mu^*_{(u,v)}\frac{1-\sign\pa{\bv_u\transpose \bZ}\sign\pa{\bv_v\transpose \bZ}}{2}}\\
&\leq\alpha\pa{r_1(E_1^*,\bmu^*)- r_1(E_1,\bmu^*)}.
\end{align*}



\section{Conclusion}\label{sec:ccl}
In this article, our main objective is to further expand the "approximation regret scope" of the \textsc{cts} policy. We not only expand it to probabilistically triggered arms (which was one of the open questions by \citet{kong2021hardness}), but we also consider a broader class of oracles compatible with \textsc{cts}. More precisely, we propose a condition, \textsc{reduce2exact}, which may seem unnatural at first, but which in fact simply expresses that sub-problems that can be solved exactly must be hidden in the original approximation problem, and that the approximation oracle exploit them to output the final solution. Knowing that the majority of approximation algorithms use one or more relaxations to an exact problem (e.g., solving a convex programming relaxation to obtain a fractional solution and then rounding this fractional solution to get a feasible solution), our assumption falls within the range of many 
CMAB-T settings. From this reduction, we naturally obtain the standard tight regret bound $\cO\pa{\log(T)/\Delta_{\min}}$. This is the first tight bound for the approximation regret on non-exact oracles.

As future work, it may be interesting to explore other CMAB-T problems where the \textsc{reduce2exact} condition does (or doesn't) hold. We also think our setting should be generalizable to the \emph{budgeted regret} setting without much difficulty (see \citet{pmlr-v89-perrault19a, perraultbudgeted2020, perrault20rsds} for examples with an approximation oracle).
Finally, we have that the approximation regret is in some way conservative compared to the greedy regret. For a given oracle, we can easily consider the equivalent of the greedy regret for that oracle. An interesting investigation would then be to extend the work of \citet{kong2021hardness} in this direction, considering other types of oracle.

\clearpage
\setlength{\bibsep}{5.1pt}
\bibliography{example}

\section*{Checklist}

\begin{enumerate}

\item For all authors...
\begin{enumerate}
  \item Do the main claims made in the abstract and introduction accurately reflect the paper's contributions and scope?
    \answerYes{See Abstract and section~\ref{sec:intro}.}
  \item Did you describe the limitations of your work?
    \answerYes{See the paragraph "Other known limitations of CTS" and the second paragraph in section~\ref{sec:ccl}.}
  \item Did you discuss any potential negative societal impacts of your work?
    \answerNA{This work does not have any potential
negative societal impacts.}
  \item Have you read the ethics review guidelines and ensured that your paper conforms to them?
    \answerYes{}
\end{enumerate}

\item If you are including theoretical results...
\begin{enumerate}
  \item Did you state the full set of assumptions of all theoretical results?
    \answerYes{See section~\ref{sec:model}.}
        \item Did you include complete proofs of all theoretical results?
    \answerYes{See all sections
in Appendix.}
\end{enumerate}

\item If you ran experiments...
\begin{enumerate}
  \item Did you include the code, data, and instructions needed to reproduce the main experimental results (either in the supplemental material or as a URL)?
    \answerNA{}
  \item Did you specify all the training details (e.g., data splits, hyperparameters, how they were chosen)?
    \answerNA{}
        \item Did you report error bars (e.g., with respect to the random seed after running experiments multiple times)?
    \answerNA{}
        \item Did you include the total amount of compute and the type of resources used (e.g., type of GPUs, internal cluster, or cloud provider)?
    \answerNA{}
\end{enumerate}

\item If you are using existing assets (e.g., code, data, models) or curating/releasing new assets...
\begin{enumerate}
  \item If your work uses existing assets, did you cite the creators?
    \answerNA{}
  \item Did you mention the license of the assets?
    \answerNA{}
  \item Did you include any new assets either in the supplemental material or as a URL?
    \answerNA{}
  \item Did you discuss whether and how consent was obtained from people whose data you're using/curating?
    \answerNA{}
  \item Did you discuss whether the data you are using/curating contains personally identifiable information or offensive content?
    \answerNA{}
\end{enumerate}

\item If you used crowdsourcing or conducted research with human subjects...
\begin{enumerate}
  \item Did you include the full text of instructions given to participants and screenshots, if applicable?
    \answerNA{}
  \item Did you describe any potential participant risks, with links to Institutional Review Board (IRB) approvals, if applicable?
    \answerNA{}
  \item Did you include the estimated hourly wage paid to participants and the total amount spent on participant compensation?
    \answerNA{}
\end{enumerate}

\end{enumerate}

\clearpage
\appendix

\section{Proof of Theorem~\ref{thm:tsexact}}\label{app:tsexact}
In the following proof, we will treat both algorithms at the same time, detailing the steps where there is a difference. Thus, for Algorithm~\ref{algo:tsbeta}, we use the convention $\beta=1$ in the following. 
We let $B\triangleq\norm{\bB}_\infty$ and $0<\varepsilon<\Delta_{\min}/(2B({m^*}^2+1))$. For the two algorithms, we consider the following events for
any time step $t\in \N^*$:
\newline
    ${\bullet}~\fZ_t\triangleq\set{\Delta_t>0}$,
    \newline
    ${\bullet}~\fB_t\triangleq \set{\norm{\bp\pa{A_t}\odot\bB\odot\pa{\bar\bmu_{t-1} - \bmu^*} }_1>{{\Delta_{\min}}/2-B({m^*}^2+1)\eps} }$,
    \newline
    $ {\bullet}~\fC_t\triangleq\set{{\norm{\bp\pa{A_t}\odot\bB\odot\pa{\btheta_{t}-\bmu^*} }_1}> \Delta_t-B\pa{{m^*}^2+1}\eps}$,
    \newline 
    ${\bullet}~\fD_t\triangleq \set{\norm{\bp\pa{A_t}\odot\bB\odot\pa{\btheta_{t}-\bar\bmu_{t-1}} }_1\geq\sqrt{\sum_{i\in [n]}\frac{2{\log\pa{2^n\pa{1+\ceil{\log_2\pa{{1}/{p^*}}}}^nT}}p_i(A_t)^2B_i^2\beta}{N_{i,t-1}}}}$.
\newline
We decompose the regret analysis into several steps, each step corresponding to a filtration of the regret against a combination of these events.
\paragraph{Step 1: bound under $\fZ_t\wedge\neg\fC_t$}The filtered regret bound 
\[\EE{\sum_{t=1}^T\Delta_t\II{\fZ_t\wedge\neg\fC_t}}\leq \Delta_{\max}\frac{cm^*}{p^*\varepsilon^2}\pa{\frac{c'}{\varepsilon ^4}}^{m^*}\] is deduced from the following two lemmas, considering the following events for a subset $Z\subset [n]$: 
\begin{align*}&\fR(\btheta',Z)\triangleq\\ &\set{Z\subset \textsc{t}\pa{A}~\text{s.t.}~A=\ora\pa{\btheta'},~\norm{\bp\pa{A}\odot\bB\odot\pa{\btheta'-\bmu^*}}_1>\Delta\pa{A}-B({m^*}^2+1)\eps},\end{align*}
\[\fS_t\pa{Z}\triangleq \set{\forall \btheta' \text{ s.t. } \norm{\pa{\bmu^*-\btheta'}\odot\be_{Z}}_\infty\leq \eps,~ \fR(\btheta'\odot\be_{Z}+\btheta_t\odot\be_{Z^c},Z) \text{ holds} },\]
\[\fT_t\pa{Z}\triangleq \set{\norm{\pa{\bmu^*-\btheta_t}\odot\be_{Z}}_\infty> \eps}.\]
\begin{lemma}
 $$\fZ_t,\neg \fC_t\imp \exists Z\subset \textsc{t}\pa{A^*},~Z\neq \emptyset~\text{s.t. the event }\fS_t\pa{Z}\wedge\fT_t\pa{Z}\text{ holds.}$$
 \label{lem:fSfT+}
\end{lemma}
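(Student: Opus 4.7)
My plan is to construct $Z$ as a subset of $\textsc{t}(A^*)$ on which $\btheta_t$ is $\varepsilon$-far from $\bmu^*$, after first converting the hypothesis $\neg\fC_t$ into a quantitative ``pessimism'' statement about $\btheta_t$ restricted to $A^*$. Using that $A_t=\ora(\btheta_t)$ with an exact oracle, I have $r(A_t,\btheta_t)\geq r(A^*,\btheta_t)$; Assumption~\ref{ass:smooth} applied to $A_t$ gives $r(A_t,\btheta_t)\leq r(A_t,\bmu^*)+\norm{\bp(A_t)\odot\bB\odot(\btheta_t-\bmu^*)}_1$. Substituting $r(A_t,\bmu^*)=r(A^*,\bmu^*)-\Delta_t$ (valid under $\fZ_t$) and the upper bound provided by $\neg\fC_t$ chains into the key pessimism inequality
\[r(A^*,\bmu^*)-r(A^*,\btheta_t)\geq B\pa{{m^*}^2+1}\varepsilon.\]

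With this in hand, I define $Z\triangleq\set{i\in \textsc{t}(A^*):\abs{\theta_{i,t}-\mu_i^*}>\varepsilon}$. Applying Assumption~\ref{ass:smooth} to $A^*$ in the inequality above yields $\sum_{i\in \textsc{t}(A^*)}p_i(A^*)B_i\abs{\theta_{i,t}-\mu_i^*}\geq B({m^*}^2+1)\varepsilon$. Splitting the sum between $Z$ and $\textsc{t}(A^*)\setminus Z$, and bounding the latter contribution by $m^*B\varepsilon$ (using $\abs{\theta_{i,t}-\mu_i^*}\leq\varepsilon$ off $Z$, together with $p_i(A^*)\leq 1$, $B_i\leq B$, and $\abs{\textsc{t}(A^*)}\leq m^*$), I obtain a strictly positive lower bound on the $Z$-contribution, so $Z\neq\emptyset$. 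Hence $\fT_t(Z)$ is immediate from the very definition of $Z$.

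The delicate part is $\fS_t(Z)$. Fix any $\btheta'$ with $\norm{(\bmu^*-\btheta')\odot\be_Z}_\infty\leq\varepsilon$, set $\tilde\btheta\triangleq\btheta'\odot\be_Z+\btheta_t\odot\be_{Z^c}$, and $A\triangleq\ora(\tilde\btheta)$. The second clause of $\fR(\tilde\btheta,Z)$, namely $\norm{\bp(A)\odot\bB\odot(\tilde\btheta-\bmu^*)}_1>\Delta(A)-B({m^*}^2+1)\varepsilon$, is obtained by replaying the chain of the first paragraph with $\tilde\btheta$ in place of $\btheta_t$, exploiting that $\tilde\btheta$ is $\varepsilon$-close to $\bmu^*$ on $Z$ and coincides with $\btheta_t$ on $Z^c$. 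The main obstacle is the first clause, $Z\subset \textsc{t}(A)$: a priori, perturbing $\btheta_t$ on $Z$ could cause $\ora(\tilde\btheta)$ to be an action whose triggering set misses part of $Z$. I expect to resolve this by refining the naive choice of $Z$ above, either by taking a maximal ``stable'' subset of $\textsc{t}(A^*)$ for which every admissible $\varepsilon$-perturbation of $\btheta_t$ keeps $\textsc{t}(\ora(\tilde\btheta))$ containing $Z$, or by iteratively shrinking $Z$ (removing any arm missed by the triggering set for some offending $\btheta'$) and arguing, via the pessimism inequality, that nonemptiness of $Z$ is preserved throughout the shrinking process.
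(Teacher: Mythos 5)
Your first two paragraphs are correct and efficient: the pessimism inequality $r(A^*,\bmu^*)-r(A^*,\btheta_t)\geq B({m^*}^2+1)\eps$ follows exactly as you chain it (exactness of the oracle, Assumption~\ref{ass:smooth} on $A_t$, $\neg\fC_t$, and $\fZ_t$), and your choice $Z=\set{i\in \textsc{t}(A^*):\abs{\theta_{i,t}-\mu_i^*}>\eps}$ is nonempty and makes $\fT_t(Z)$ automatic. Your observation that the second clause of $\fR$ then follows because $\tilde\btheta$ is $\eps$-close to $\bmu^*$ on \emph{all} of $\textsc{t}(A^*)$ (on $Z$ by hypothesis on $\btheta'$, on $\textsc{t}(A^*)\setminus Z$ by the very definition of $Z$) is also sound.

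However, the proof is not complete: the first clause of $\fR$, namely $Z\subset \textsc{t}\pa{\ora\pa{\btheta'\odot\be_Z+\btheta_t\odot\be_{Z^c}}}$ for \emph{every} admissible $\btheta'$, is the actual content of the lemma, and you only name two candidate strategies without executing either. This is precisely where the paper's proof spends all its effort: it runs a dichotomy recursion starting from $Z_1=\textsc{t}(A^*)$ — either every $\eps$-perturbation on $Z_j$ keeps $Z_j$ inside the triggering set of the oracle's output (in which case $\fS_t(Z_j)$ holds and one stops), or some offending perturbation produces an action $A$ with $Z_{j+1}\triangleq Z_j\cap\textsc{t}(A)\subsetneq Z_j$, whose nonemptiness is proved by showing that any action disjoint from $Z_j$ has perturbed reward equal to $r(\cdot,\btheta_t)\leq r(A_t,\btheta_t)$, which $\neg\fC_t$ forces strictly below $r(A^*,\text{perturbed})$. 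Crucially, each switch of perturbation costs an extra $2(\abs{Z_j}-\abs{Z_{j+1}})$-type error of order $2(m^*-1)\eps$, and the budget $B({m^*}^2+1)\eps$ is calibrated so that $m^*+2(m^*-1)+\dots+2={m^*}^2$ absorbs all accumulated errors over at most $m^*-1$ shrinking steps. Your sketch ("maximal stable subset" or "iteratively shrinking") contains neither the nonemptiness argument for the shrunk sets nor this error-accumulation bookkeeping, and without the latter there is no reason the inequality defining $\fR$ survives the shrinking; so as written the argument does not close. (A secondary check you would also need: after shrinking, $\fT_t$ of the final set must still hold — with your initial $Z$ this is preserved under taking subsets, which is a small advantage of your starting point over the paper's, but it does not substitute for the missing recursion.)
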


 \begin{lemma}\label{lem:fSfT++} There are two constants $c,c'$ such that 
\[ \sum_{Z\subset \textsc{t}\pa{A^*}, Z\neq \emptyset}\EE{\sum_{t=1}^{T} \II{\fS_t\pa{Z},\fT_t\pa{Z}}}\leq \frac{cm^*}{p^*\varepsilon^2}\pa{\frac{c'}{\varepsilon ^4}}^{m^*}.\]
\end{lemma}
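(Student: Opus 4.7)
The plan decomposes into three main arguments plus a summation, following the combinatorial Thompson Sampling framework of \citet{Wang2018}.

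\emph{Decoupling via the product prior.} The definition of $\fS_t\pa{Z}$ universally quantifies over all $\btheta'$ with $\norm{\pa{\bmu^*-\btheta'}\odot\be_Z}_\infty\le\eps$, and the hybrid plugged into $\fR$ is $\btheta'\odot\be_Z + \btheta_t\odot\be_{Z^c}$; hence, conditionally on the history, the indicator of $\fS_t\pa{Z}$ is a deterministic function of $\btheta_t\odot\be_{Z^c}$ alone, while the indicator of $\fT_t\pa{Z}$ depends only on $\btheta_t\odot\be_Z$. Both the Beta prior of Algorithm~\ref{algo:tsbeta} and the Gaussian prior of Algorithm~\ref{algo:tsgauss} factor as products across arms, and their posteriors remain products; hence $\btheta_t\odot\be_Z$ and $\btheta_t\odot\be_{Z^c}$ are conditionally independent, giving
\[\PPc{\fS_t\pa{Z}\wedge \fT_t\pa{Z}}{\cH_t} = \PPc{\fS_t\pa{Z}}{\cH_t}\cdot \PPc{\fT_t\pa{Z}}{\cH_t}.\]

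\emph{Coupling between $\fS_t\pa{Z}$ and triggering.} Whenever $\fS_t\pa{Z}$ holds and $\fT_t\pa{Z}$ fails, $\btheta_t$ itself is admissible in the quantifier of $\fS_t\pa{Z}$, so $\fR\pa{\btheta_t,Z}$ holds and $Z\subset \textsc{t}\pa{A_t}$. Consequently every $i\in Z$ is triggered with conditional probability at least $p^*$, and the counters $N_{i,t-1}$ for $i\in Z$ grow in expectation at each such round. This is the only step where the $1/p^*$ factor enters: it would vanish under exact triggering, matching the standard non-triggered analysis.

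\emph{Posterior anti-concentration and counting.} Classical Beta / Gaussian posterior concentration (the ``sufficient exploration'' lemmas of \citet{Wang2018} and \citet{perrault2020statistical}) yields $\PPc{\abs{\theta_{i,t}-\mu_i^*}>\eps}{\cH_t}\le C_1 e^{-C_2(N_{i,t-1}+1)\eps^2}+\II{\abs{\bar\mu_{i,t-1}-\mu_i^*}>\eps/2}$, the latter term contributing only a $\cO\pa{1/\eps^2}$ additive constant via Hoeffding. Combining the factorisation above, the triggering coupling, and this concentration through a stopping-time argument produces a per-$Z$ bound of order $\pa{c'/\eps^4}^{\abs{Z}}/(p^*\eps^2)$; summing over the $2^{m^*}-1$ nonempty subsets $Z\subset \textsc{t}\pa{A^*}$ absorbs into the claimed $\frac{cm^*}{p^*\eps^2}\pa{c'/\eps^4}^{m^*}$ bound.

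\emph{Main obstacle.} The blow-up $\pa{c'/\eps^4}^{m^*}$ is intrinsic to combinatorial TS (Theorem~3 of \citet{Wang2018}), so the goal is merely to reproduce it rather than improve on it. The genuine difficulty is that the counter of an arm $i\in Z$ grows in the coupling only when $\fT_t\pa{Z}$ fails on \emph{every} coordinate of $Z$ simultaneously, so the sufficient-exploration phases of the $\abs{Z}$ arms cannot be decoupled. I would resolve this by an induction on $\abs{Z}$, or equivalently a decomposition based on which arm of $Z$ currently has the smallest counter; this is precisely where the multiplicative $\pa{1/\eps^4}^{\abs{Z}}$ appears. The additional CMAB-T subtlety is that the triggering increment in the second step is stochastic (of probability $p^*$) rather than deterministic, and must be tracked uniformly to produce the $1/p^*$ factor in the final bound.
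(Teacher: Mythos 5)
Your architecture is the same as the paper's: the conditional independence of $\fS_t\pa{Z}$ (a function of $\btheta_t\odot\be_{Z^c}$ only) and $\fT_t\pa{Z}$ (a function of $\btheta_t\odot\be_{Z}$ only) given the history, a round-robin / smallest-counter bookkeeping over the arms of $Z$ that guarantees $N_{i,\tau-1}\geq q$ for all $i\in Z$ in the $q$-th epoch, the $1/p^*$ correction for stochastic triggering, and a final sum over nonempty subsets. The gap is in your third step. The epoch (geometric) argument does not consume an upper bound on $\PPc{\abs{\theta_{i,t}-\mu_i^*}>\eps}{\cH_t}$; it consumes an upper bound on the \emph{expectation of the reciprocal} of the success probability, namely $\EE{\sup_\tau\prod_{i\in Z}1/\PPc{\abs{\theta_{i,\tau}-\mu_i^*}\leq\eps}{\cH_\tau}}$, because the expected number of $\fS_t\wedge\fT_t$ rounds before the epoch ends scales like $1/\pa{p^*\,\PPc{\neg\fT_\tau\pa{Z}}{\cH_\tau}}$. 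Your stated inequality is vacuous exactly on the rounds that matter: when $\abs{\bar\mu_{i,t-1}-\mu_i^*}>\eps/2$ the posterior can put exponentially little mass in $[\mu_i^*-\eps,\mu_i^*+\eps]$, the epoch can then last exponentially long, and ``Hoeffding bounds the number of such rounds by $\cO(1/\eps^2)$'' does not cap the damage each such round does, since each one is charged the (a priori unbounded) reciprocal. What is actually needed --- and what the paper imports from Lemmas 5 and 6 of \citet{Wang2018} in the Beta case --- is the pair of bounds $\EE{1/p_{i,k}}\leq 4/\eps^2$ for \emph{every} $k$, and $\EE{1/p_{i,k}}\leq 1+\cO\pa{e^{-\eps^2k/2}\eps^{-2}}$ for $k\gtrsim 1/\eps^2$, where the expectation integrates the reciprocal against the law of the $k$ observations. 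The uniform $4/\eps^2$ bound is an anti-concentration-in-expectation statement with no analogue in your sketch, and it is precisely what converts a ``bad empirical mean'' epoch into an $\cO(\eps^{-2})$ charge; without it the $\pa{c'/\eps^4}^{m^*}$ factor cannot be extracted.

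A second, setting-specific issue: for \textsc{cts-gaussian} under Assumption~\ref{ass:subgau} the outcomes are only sub-Gaussian and the counters $N_{i,\tau-1}$ are data-dependent stopping times, so one cannot condition on $N_{i,\tau-1}=k_i$ and treat $\bar\mu_{i,\tau-1}$ as having a known law of $k_i$ i.i.d.\ draws. The paper resolves this with a stochastic-dominance step followed by a supermartingale and optional-stopping (Doob) argument before the explicit two-regime computation of $\EE{g_i\pa{\sqrt{\beta/(4k_i)}\abs{\eta_i}}}$; your appeal to ``classical Beta/Gaussian posterior concentration'' does not cover this, and it is where most of the technical content of the paper's proof lives.
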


\begin{proof}[Proof of Lemma~\ref{lem:fSfT+}]
It is sufficient to prove that
 \begin{align}\fZ_t,\neg \fC_t\imp \exists Z\subset \textsc{t}\pa{A^*},~Z\neq \emptyset~\text{s.t. }\fS_t\pa{Z}\text{ holds,}\label{rel:toprovets}\end{align}
 because $\neg \fC_t$ and $\fS_t\pa{Z}$ together imply  $\fT_t\pa{Z}$. Then, we get \eqref{rel:toprovets} in a similar way as Lemma~2 from \citet{huyuk2019analysis}, which is possible as their Assumption 3 is implied by our Assumption~\ref{ass:smooth}. More precisely, the only places where we use our Assumption~\ref{ass:smooth} instead of their Assumption 3 are in cases $1a,2a,...$, when we show that $\fR(\btheta'\odot\be_{Z}+\btheta_t\odot\be_{Z^c},Z)$ holds. The detailed proof is given in the following.

 We first consider the choice $Z=Z_1=\textsc{t}\pa{A^*}$. 
Two cases can be distinguished:
\begin{itemize}
    \item[1a)] $\forall \btheta'$ s.t. $\norm{ \pa{\bmu^*-\btheta'}\odot\be_{Z_1}}_{\infty}\leq \eps $, we have $Z_1\subset \textsc{t}\pa{\ora\pa{\btheta'\odot\be_{Z_1}+\btheta_t\odot\be_{{Z_1}^c}}}$.
        \item[1b)] $\exists \btheta'$ s.t. $\norm{ \pa{\bmu^*-\btheta'}\odot\be_{Z_1}}_{\infty}\leq \eps $ such that $Z_1\not\subset \textsc{t}\pa{\ora\pa{\btheta'\odot\be_{Z_1}+\btheta_t\odot\be_{{Z_1}^c}}}$.
\end{itemize}
\textbf{1a)} For the first case, consider any vector $\btheta'$ such that $\norm{ \pa{\bmu^*-\btheta'}\odot\be_{Z_1}}_{\infty}\numrel{\leq}{finfirstlemm1} \eps$ and let $A\numrel{=}{rel0lem1}\ora\pa{\btheta'\odot\be_{Z_1}+\btheta_t\odot\be_{{Z_1}^c}}$. We can write
$$r\pa{A,\btheta'\odot\be_{Z_1}+\btheta_t\odot\be_{{Z_1}^c}} \numrel{\geq}{rel1lem1} r\pa{A^*,\btheta'\odot\be_{Z_1}+\btheta_t\odot\be_{{Z_1}^c}} \numrel{\geq}{rel2lem1} r\pa{A^*,\bmu^*} - B m^*\eps,$$
where \eqref{rel1lem1} is from \eqref{rel0lem1}, and \eqref{rel2lem1} is from \eqref{finfirstlemm1}.
This rewrites as
$$r\pa{A,\btheta'\odot\be_{Z_1}+\btheta_t\odot\be_{{Z_1}^c}} \geq r\pa{A^*,\bmu^*} - Bm^*\eps >  r\pa{A^*,\bmu^*} - B\pa{{m^*}^2+ 1}\eps,$$
so $\fR_t(\btheta'\odot\be_{Z_1}+\btheta_t\odot\be_{{Z_1}^c},Z_1)$ holds. Therefore, we have proved that $\fS_t\pa{Z_1}$ holds.

\textbf{1b)} For the second case, we have some vector $\btheta'$ such that $\norm{ \pa{\bmu^*-\btheta'}\odot\be_{Z_1}}_{\infty}\numrel{\leq}{lastlemma1} \eps  $, and some action $A=\ora\pa{\btheta'\odot\be_{Z_1}+\btheta_t\odot\be_{{Z_1}^c}}$  such that $Z_1\not\subset\textsc{t}\pa{ A}$.
We consider $Z_2=Z_1\cap \textsc{t}\pa{ A}\neq Z_1$. We first prove that $Z_2\neq \emptyset$ by showing that if an action $A'$ is such that $Z_1\cap\textsc{t}\pa{A'}\numrel{=}{relS'lem1}\emptyset$, then $A'\neq A$ as it has a lower reward value than that of $A^*$:
\begin{align*}
    r\pa{A',\btheta'\odot\be_{Z_1}+\btheta_t\odot\be_{{Z_1}^c}} \numrel{=}{relS'lem1bis} r\pa{A',\btheta_t} 
    &\numrel{\leq}{relStlem1}
    r\pa{A_t,\btheta_t} 
    \\&\numrel{\leq}{relCtlem1}
    r\pa{A^*, \bmu^*}-B\pa{{m^*}^2+1}\eps
    \\&< r\pa{A^*,\bmu^*}-Bm^*\eps
    \\&\numrel{\leq}{relepsslem1} r\pa{A^*,\btheta'\odot\be_{Z_1}+\btheta_t\odot\be_{{Z_1}^c}},
\end{align*}
where \eqref{relS'lem1bis} is from \eqref{relS'lem1}, \eqref{relStlem1} is from the definition of $A_t$, \eqref{relCtlem1} is from $\neg \fC_t$ and \eqref{relepsslem1} is from \eqref{lastlemma1}.
Now, we again distinguish two cases:
\begin{itemize}
    \item[2a)] $\forall \btheta''$ s.t. $\norm{ \pa{\bmu^*-\btheta''}\odot\be_{Z_2}}_{\infty}\leq \eps $, we have $Z_2\subset \textsc{t}\pa{\ora\pa{\btheta''\odot\be_{Z_2}+\btheta_t\odot\be_{{Z_2}^c}}}$.
        \item[2b)] $\exists \btheta''$ s.t. $\norm{ \pa{\bmu^*-\btheta''}\odot\be_{Z_2}}_{\infty}\leq \eps $ such that $Z_2\not\subset \textsc{t}\pa{\ora\pa{\btheta'\odot\be_{Z_2}+\btheta_t\odot\be_{{Z_2}^c}}}$.
\end{itemize}
Notice that when $\norm{ \pa{\bmu^*-\btheta''}\odot\be_{Z_2}}_{\infty}\numrel{\leq}{rellem1eps}  \eps$, then
\begin{align}
    r\pa{A,\btheta''\odot\be_{{Z_2}}+\btheta_t\odot\be_{{{Z_2}}^c}}&\geq r\pa{A,\btheta'\odot\be_{{Z_1}}+\btheta_t\odot\be_{{{Z_1}}^c}} - 2B(m^*-1)\eps. \label{rel2epslemm1}
\end{align}
Indeed, \eqref{rel2epslemm1} is a consequence of 
\begin{align*}
&\norm{\pa{\btheta'\odot\be_{{Z_1}}+\btheta_t\odot\be_{{{Z_1}}^c} - \btheta''\odot\be_{{Z_2}}-\btheta_t\odot\be_{{{Z_2}}^c}}\odot\be_{\textsc{t}(A)}}_1 \\&=
\norm{\pa{\btheta' - \btheta''}\odot\be_{Z_2}}_1
\\&\leq \norm{\pa{\bmu^* - \btheta'}\odot\be_{Z_2}}_1 + \norm{\pa{\bmu^* - \btheta''}\odot\be_{Z_2}}_1
\\&\leq 2(m^*-1)\eps,
\end{align*}
where we used \eqref{rellem1eps}, \eqref{lastlemma1} and that $Z_2$ is strictly included in $Z_1$.

\textbf{2a)} For the first case, considering any vector $\btheta''$ such that $\norm{ \pa{\bmu^*-\btheta''}\odot\be_{Z_2}}_{\infty}\leq \eps $, we have with $\tilde A=\ora\pa{\btheta''\odot\be_{Z_2}+\btheta_t\odot\be_{{Z_2}^c}}$ that
\begin{align*}r\pa{\tilde A, \btheta''\odot\be_{{Z_2}}+\btheta_t\odot\be_{{{Z_2}}^c}}&\geq r\pa{A, \btheta''\odot\be_{{Z_2}}+\btheta_t\odot\be_{{{Z_2}}^c}}\\&\numrel{\geq}{releps21lem1}
r\pa{A,\btheta'\odot\be_{{Z_1}}+\btheta_t\odot\be_{{Z_1}^c}} - 2B(m^*-1)\eps
\\&\geq r\pa{A^*,\btheta'\odot\be_{{Z_1}}+\btheta_t\odot\be_{{Z_1}^c}} - 2B(m^*-1)\eps\\&\numrel{\geq}{otherlem1} 
r\pa{A^*,\bmu^*} -Bm^*\eps -2B(m^*-1)\eps 
\\&\geq r\pa{A^*,\bmu^*} -B({m^*}^2+1)\eps
,
\end{align*}
where \eqref{releps21lem1} uses \eqref{rel2epslemm1} and \eqref{otherlem1} uses \eqref{lastlemma1}.
Therefore, $\fR_t(\btheta'\odot\be_{{Z_2}}+\btheta_t\odot\be_{{{Z_2}}^c},{Z_2})$ holds, and we proved that $\fS_t(Z_2)$ holds. 

\textbf{2b)} For the second case, we have a vector $\btheta''$ such that $\norm{ \pa{\bmu^*-\btheta''}\odot\be_{Z_2}}_{\infty}\leq \eps $ and an action $\tilde A=\ora\pa{\btheta''\odot\be_{Z_2}+\btheta_t\odot\be_{{Z_2}^c}}$ such that $Z_2\not\subset \textsc{t}\pa{\tilde A}$. We consider ${Z_3}=Z_2\cap \textsc{t}\pa{\tilde A}$. Again, $Z_3\neq~\emptyset$ because for any $A''$ such that $\textsc{t}\pa{A''}\cap Z_2 = \emptyset$, we have $A''\neq \ora\pa{\btheta''\odot\be_{Z_2}+\btheta_t\odot\be_{{Z_2}^c}}$:
\begin{align*}
    r\pa{A'',\btheta''\odot\be_{Z_2}+\btheta_t\odot\be_{{Z_2}^c}} = r\pa{A'',\btheta_t} 
    &\leq
    r\pa{A_t,\btheta_t} 
    \\&\leq
    r\pa{A^*, \bmu^*}-B\pa{{m^*}^2+ 1}\eps
    \\&< r\pa{A^*, \bmu^*}-Bm^*\eps -2B(m^*-1)\eps
    \\&\leq r\pa{A,\btheta''\odot\be_{Z_2}+\btheta_t\odot\be_{{Z_2}^c}},
\end{align*}
where the last inequality is obtained in the same way as in inequalities from \eqref{releps21lem1} to \eqref{otherlem1}.

We could repeat the above argument and each time the size $Z_i$ is decreased by at least $1$. Thus, after at most $m^*-1$ steps, since $m^*+ 2(m^*-1) + 2(m^*-2)+\dots+ 2 = {m^*}^2<{m^*}^2 +1$, we could reach the end and find a $Z_i\neq \emptyset$ such that $\fS_t\pa{Z_i}$ holds.  
 \end{proof}

\begin{proof}[Proof of Lemma~\ref{lem:fSfT++}]
Let $Z=\set{z_0,\dots,z_{\abs{Z}-1}}\subset \textsc{t}\pa{A^*}, Z\neq \emptyset$.
If $\abs{Z}=1$, we let
$$\eta_{q,0}\triangleq \set{t\geq 1,~\abs{\set{t'\in [t-1],~\fS_{t'}\pa{Z}\wedge\neg\fT_{t'}\pa{Z}\wedge\set{z_{0}\in S_{t'}}}}=q}.$$
Else, for $t>1$, we recursively define
$$c_{t+1}\triangleq c_t+\II{\fS_t\pa{Z}\wedge\neg\fT_t\pa{Z}\wedge\set{z_{c_t}\in S_t}}\mod{\abs{Z}},$$
with $c_1\triangleq 0$ and let 
$$\eta_{q,k}\triangleq \set{t\geq 1,~c_t=k,~\abs{t'\in [t-1],~c_{t'}=k\neq c_{t'+1}}=q}.$$
Notice that for $\tau\geq \inf \eta_{q,0}$, we have $N_{i,\tau-1}\geq q$ for all $i\in Z$.
We have\begin{align*}
    &\EE{\sum_{t=1}^{T} \II{\fS_t\pa{Z},\fT_t\pa{Z}}} = \sum_{q\geq 0}\sum_{k= 0}^{\abs{Z}-1} \EE{\sum_{t\in\eta_{q,k}}\II{\fS_t\pa{Z},\fT_t\pa{Z}}}
    \\&\leq 
    \frac{1}{p^*}\sum_{q\geq 0}\sum_{k= 0}^{\abs{Z}-1} \EE{\sum_{t\in\eta_{q,k}}\II{\fS_t\pa{Z},\fT_t\pa{Z}, z_{c_t}\in S_t}}
    \\&\leq\frac{1}{p^*}\sum_{q\geq 0}\sum_{k= 0}^{\abs{Z}-1} \pa{\EE{\sup_{\tau\geq{\inf\eta_{q,k}}}\frac{1}{\PPc{\neg\fT_\tau\pa{Z}}{\cH_\tau}}}-1}
    \\&\leq \frac{\abs{Z}}{p^*}\sum_{q\geq 0}\pa{\EE{\sup_{\tau\geq{\inf\eta_{q,0}}}\prod_{i\in Z}\frac{1}{\PPc{\abs{\theta_{i,\tau}-\mu_i^*}\leq \eps}{\cH_\tau}}}-1}.
\end{align*}
From the initialization phase, we can assume that the event 
\[\fM_t\triangleq \set{\forall i\in [n],~N_{i,t-1}\geq 1}\]
holds (under the complementary event, we have the upper bound $n$). If there is no initialization, we can have $q=0$ in the following, noticing that when $\theta_{i,t}$ is uniform on $[a,b]$, then the probability $\PPc{\abs{\theta_{i,t}-\mu_i^*}\leq \eps}{\cH_t}$ is equal to $2\eps/(b-a)$. We are thus interested in bounding 
$$\frac{\abs{Z}}{p^*}\underbrace{\sum_{q\geq 1} \pa{\EE{\sup_{\tau\geq{\tau_q}}\prod_{i\in Z}\frac{1}{\PPc{\abs{\theta_{i,\tau}-\mu_i^*}\leq \eps}{\cH_\tau}}}-1}}_{\numterm{rel:tst2}},$$
with $\tau_q\triangleq \inf\eta_{q,0}$. We have
\begin{align*}\eqref{rel:tst2}&\leq
    \sum_{q\geq 1}\EE{\sup_{\tau\geq{\tau_q}}\sum_{Z'\subset Z,~Z'\neq \emptyset}{\prod_{i\in Z'}\pa{\frac{1}{\PPc{\abs{\theta_{i,\tau}-\mu_i^*}\leq \eps}{\cH_\tau}}-1}}}\\&\leq
   \sum_{q\geq 1} \sum_{Z'\subset Z,~Z'\neq \emptyset} \underbrace{\EE{\sup_{\tau\geq{\tau_q}}{\prod_{i\in Z'}\pa{\frac{1}{\PPc{\abs{\theta_{i,\tau}-\mu_i^*}\leq \eps}{\cH_\tau}}-1}}}}_{\numterm{rel:ts2}}\end{align*}
   Then, we can take a union bound on the counters:
   \begin{align*}
    \eqref{rel:ts2}&\leq
   \sum_{\bk\in [q..\infty)^{Z'}}
   \EE{\sup_{\tau\geq{\tau_q}}\II{\forall i \in Z',~N_{i,\tau-1}=k_i}{\prod_{i\in Z'}\pa{\frac{1}{\PPc{\abs{\theta_{i,\tau}-\mu_i^*}\leq \eps}{\cH_\tau}}-1}}}.
\end{align*}

From this point, there are two distinct analysis depending on whether we consider Algorithm~\ref{algo:tsbeta} or Algorithm~\ref{algo:tsgauss}.

\textbf{For Algorithm~\ref{algo:tsbeta}:}
\newline
For any arm $i\in [n]$, $k_i\in \N$, we define
$p_{i,k_i}$ as the probability of $\abs{\tilde\theta_{i,k_i}-\mu_i^*}\leq \eps$, where $\tilde\theta_{i,k_i}$ is a sample from the posterior of arm $i$ when there are $k_i$ observations of arm $i$ (i.e., $p_{i,k_i}$ is a random variable measurable with respect to those $k_i$ independent draws of arm $i$).
We have 
\begin{align*}
\EE{\sup_{\tau\geq{\tau_q}}\II{\forall i \in Z',~N_{i,\tau-1}=k_i}{\prod_{i\in Z'}\pa{\frac{1}{\PPc{\abs{\theta_{i,\tau}-\mu_i^*}\leq \eps}{\cH_\tau}}-1}}}&=
\EE{\prod_{i\in Z'}\pa{\frac{1}{p_{i,k_i}}-1}},
     \\&=
\prod_{i\in Z'}\EE{\pa{\frac{1}{p_{i,k_i}}-1}}.
     \end{align*}
From Lemma~5,6 in \citet{Wang2018}, we know that  
\[{\EE{\frac{1}{p_{i,k_i}}}}\leq \left\{
    \begin{array}{ll}
        4/\eps^2 & \mbox{for every } k_i\geq 0 \\
        1+6c''\cdot{e^{-\eps^2 k_i/2}}\eps^{-2} +\frac{2}{e^{\eps^2k_i/8}-2}& \mbox{if }  k_i> 8/\eps^2,
    \end{array}
\right.\]
for some universal constant $c''$.
There are thus two cases:
     If $q>8/\eps^2$, then some simple calculations show that $\sum_{Z'\subset Z,~Z'\neq \emptyset}\eqref{rel:ts2}$ is bounded by a term of the form
     $e^{-\eps^2 q/8}\pa{c'\eps^{-4}}^{\abs{Z}} ,
$ where $c'$ is a universal constant,
and if $q\leq 8/\eps^2$, then $\sum_{Z'\subset Z,~Z'\neq \emptyset}\eqref{rel:ts2}$ is bounded by 
$
     \pa{  c\eps^{-4}}^{\abs{Z}} ,
$
where $c$ is a universal constant.
Summing over $q\geq 1$, we thus get the desired result.

\textbf{For Algorithm~\ref{algo:tsgauss}:}
\newline
One can notice that for all $i\in Z'$, all $k_i\geq q$, $\II{N_{i,\tau-1}=k_i}\pa{\frac{1}{\PPc{\abs{\theta_{i,\tau}-\mu_i^*}\leq \eps}{\cH_\tau}}-1}$ is of the form  $\II{N_{i,\tau-1}=k_i}g_i\pa{\abs{\bar\mu_{i,\tau-1}-\mu_i^*}}$, with $g_i$ being an increasing function on $\R_+$. Indeed, we see that the conditional distribution of $\theta_{i,\tau}-\bar\mu_{i,\tau-1}$ is $\cN\pa{0,\beta N_{i,\tau-1}^{-1}/4}$, which is symmetric, so we have 
\[\PPc{\abs{\theta_{i,\tau}-\mu_i^*}\leq \eps}{\cH_\tau}=\PPc{\abs{\theta_{i,\tau}-\bar\mu_{i,\tau-1}+\abs{\bar\mu_{i,\tau-1}-\mu_i^*}}\leq \eps}{\cH_\tau}.\]
In addition, under $\II{N_{i,\tau-1}=k_i}$, the conditional distribution of $\theta_{i,\tau}-\bar\mu_{i,\tau-1}$ does not depend on the history, but only on $k_i$. Therefore, the above probability is a function of $\abs{\bar\mu_{i,\tau-1}-\mu_i^*}$ and so the function $g_i$ exists. It
is increasing on $\R_+$ because for any fixed $\sigma>0$,
\[\frac{\partial}{\partial x}\int_{x-\eps}^{x+\eps}\frac{1}{\sqrt{2\pi\sigma^2}}e^{-\frac{u^2}{2\sigma^2}}\mathrm{d}u=
\frac{1}{\sqrt{2\pi\sigma^2}}\pa{e^{-\frac{(x+\eps)^2}{2\sigma^2}}-e^{-\frac{(x-\eps)^2}{2\sigma^2}}}< 0\text{ for }x> 0.
\]
In particular, we can consider the inverse function $g_i^{-1}$.
We now want to use a stochastic dominance argument in order to treat the outcomes as if they were Gaussian: we have for any $\bk\in [q..\infty)^{Z'}$,
\begin{align}\nonumber
    &\EE{\sup_{\tau\geq{\tau_q}}\prod_{i\in Z'}\pa{\II{N_{i,\tau-1}=k_i}g_i\pa{\abs{\bar\mu_{i,\tau-1}-\mu_i^*}}}}\\&=
    \EE{\sup_{\tau\geq{\tau_q}}\prod_{i\in Z'}\pa{\II{N_{i,\tau-1}=k_i}\int_{0}^\infty \II{g_i\pa{\abs{\bar\mu_{i,\tau-1}-\mu_i^*}}\geq u_i}\mathrm{d}u_i}}\nonumber\\&\leq\int_{\bu\in \R_+^{Z'}}\EE{\sup_{\tau\geq{\tau_q}}\prod_{i\in Z'}{\II{N_{i,\tau-1}=k_i} \II{g_i\pa{\abs{\bar\mu_{i,\tau-1}-\mu_i^*}}\geq u_i}}} \mathrm{d}\bu
    \nonumber\\&=
    \int_{\bu\in \R_+^{Z'}}\EE{\prod_{i\in Z'}{ \II{N_{i,\tau^*-1}=k_i}\II{g_i\pa{\abs{\bar\mu_{i,\tau^*-1}-\mu_i^*}}\geq u_i}}} \mathrm{d}\bu
    \label{rel:stodom},
\end{align}
where $\tau^*$ is the first $\tau\geq \tau_q$ such that $\II{\forall i \in Z',~N_{i,\tau-1}=k_i~\text{and}~g_i\pa{\abs{\bar\mu_{i,\tau-1}-\mu_i^*}}\geq u_i}$ holds, and is $\infty$ if it never holds. 
\begin{align*}
    \eqref{rel:stodom}&= \int_{\bu\in \R_+^{Z'}}\EE{\prod_{i\in Z'}{ \II{N_{i,\tau^*-1}=k_i}\II{g_i\pa{\abs{\bar\mu_{i,\tau^*-1}-\mu_i^*}}\geq u_i\vee g_i(0)}}} \mathrm{d}\bu
    \\&= \int_{\bu\in \R_+^{Z'}}\EE{\prod_{i\in Z'}{ \II{N_{i,\tau^*-1}=k_i}\II{{\abs{\bar\mu_{i,\tau^*-1}-\mu_i^*}}\geq g_i^{-1}\pa{u_i\vee g_i(0)}}}} \mathrm{d}\bu
    \\&= \int_{\bu\in \R_+^{Z'}}\sum_{\bs \in \set{-1,1}^{Z'}}\underbrace{\EE{\prod_{i\in Z'}{ \II{N_{i,\tau^*-1}=k_i}\II{{s_i\pa{\bar\mu_{i,\tau^*-1}-\mu_i^*}}\geq g_i^{-1}\pa{u_i\vee g_i(0)}}}}}_{\numterm{rel:subgauetstodom}}\mathrm{d}\bu
    \end{align*}

\begin{align*}
\eqref{rel:subgauetstodom}    
    &\leq {\PP{\frac{e^{\sum_{i\in Z'}{ \!N_{i,\tau^*-1}\!  \pa{4{s_ig_i^{-1}\pa{u_i\vee g_i(0)}}\pa{\bar\mu_{i,\tau^*-1}\!-\!\mu_i^*}-2{\pa{g_i^{-1}\pa{u_i\vee g_i(0)}}^2}}}}}{e^{\sum_{i\in Z'}{2\pa{g_i^{-1}\pa{u_i\vee g_i(0)}}^2k_i}{}}}\geq 1,\pa{N_{i,\tau^*-1}}_{i\in Z'}\!=\!\bk}} 
    \\&\leq
    {\PP{\frac{e^{\sum_{i\in Z'}{ N_{i,\tau^*-1}  \pa{4{s_ig_i^{-1}\pa{u_i\vee g_i(0)}}{}\pa{\bar\mu_{i,\tau^*-1}-\mu_i^*}-2{\pa{g_i^{-1}\pa{u_i\vee g_i(0)}}^2}{}}}}}{e^{\sum_{i\in Z'}2{\pa{g_i^{-1}\pa{u_i\vee g_i(0)}}^2k_i}{}}}\geq 1}} 
    \\&\leq
    {\frac{\EE{\exp\pa{\sum_{i\in Z'}{ N_{i,\tau^*-1}  \pa{4{s_ig_i^{-1}\pa{u_i\vee g_i(0)}}\pa{\bar\mu_{i,\tau^*-1}-\mu_i^*}-2{\pa{g_i^{-1}\pa{u_i\vee g_i(0)}}^2}{}}}}}}{\exp\pa{\sum_{i\in Z'}4{\pa{g_i^{-1}\pa{u_i\vee g_i(0)}}^2k_i}{}}}}
    \\&=
    {\frac{\EE{\exp\pa{\sum_{t=1}^{\tau^*-1}\sum_{i\in Z'\cap S_t}{   \pa{4{s_ig_i^{-1}\pa{u_i\vee g_i(0)}}{}\pa{X_{i,t}-\mu_i^*}-2{\pa{g_i^{-1}\pa{u_i\vee g_i(0)}}^2}{}}}}}}{\exp\pa{\sum_{i\in Z'}2{\pa{g_i^{-1}\pa{u_i\vee g_i(0)}}^2k_i}{}}}}.
\end{align*}
From Assumption~\ref{ass:subgau}, and from the fact that either $D_{\text{trig}}$ is independent from the outcomes, or the outcomes are mutually independent and each individual outcome  is independent from the fact that it is triggered, we have that \[M_{\tau}=\exp\pa{\sum_{t=1}^{\tau-1}\sum_{i\in Z'\cap S_t}{   \pa{4{s_ig_i^{-1}\pa{u_i\vee g_i(0)}}{}\pa{X_{i,t}-\mu_i^*}-2{\pa{g_i^{-1}\pa{u_i\vee g_i(0)}}^2}{}}}}\]
is a supermartingale:
\begin{align*}\EEc{M_{\tau}}{\cF_{\tau-1}}&=M_{\tau-1}\EEc{e^{{\sum_{i\in Z'\cap S_{\tau-1}}{   \pa{4{s_ig_i^{-1}\pa{u_i\vee g_i(0)}}{}\pa{X_{i,{\tau-1}}-\mu_i^*}-2{\pa{g_i^{-1}\pa{u_i\vee g_i(0)}}^2}{}}}}}}{\cF_{\tau-1}}\\&\leq M_{\tau-1}.\end{align*}
Since $\tau^*$ is a stopping time with respect to $\cF_\tau$, we have from Doob's optional sampling theorem for non-negative supermartingales\footnote{We use the version that relies on Fatou's lemma (\citet{durrett2019probability}, Theorem 5.7.6), so that it is not needed to have any additional condition on the stopping time $\tau^*$.} that $\EE{M_{\tau^*}}\leq 1$. Therefore, 
\begin{align*}
\eqref{rel:subgauetstodom}    
    &\leq \exp\pa{-\sum_{i\in Z'}2{\pa{g_i^{-1}\pa{u_i\vee g_i(0)}}^2k_i}{}}.
    \end{align*}
    Now, we want to use the following fact (see \citet{chang2011chernoff}):
    if $\eta\sim\cN(0,1)$, then with $\beta>1$,
    \[\sqrt{\frac{2e}{\pi}}\frac{\sqrt{\beta-1}}{\beta}e^{-\beta x^2/2}\leq \PP{\abs{\eta}\geq x}.\]
    Indeed, this gives
    \[\sqrt{\frac{2e}{\pi}}\frac{\sqrt{\beta-1}}{\beta}\exp\pa{-2{\pa{g_i^{-1}\pa{u_i\vee g_i(0)}}^2k_i}{}}\leq \PP{\abs{\eta_i}\geq  {g_i^{-1}\pa{u_i\vee g_i(0)}}\sqrt{\frac{4k_i}{\beta }}},\]
    where $\bheta\sim\cN(0,1)^{\otimes Z'}$.
    Thus, \begin{align*}
     \eqref{rel:stodom}&\leq \pa{\sqrt{\frac{\pi}{2e}}\frac{2\beta}{\sqrt{\beta-1}}}^{\abs{Z'}} \int_{\bu\in \R_+^{Z'}}\prod_{i\in Z'}\PP{\sqrt{\frac{\beta }{4k_i}}\abs{\eta_i}\geq  {g_i^{-1}\pa{u_i\vee g_i(0)}}}\mathrm{d}\bu\\&
     = \pa{\sqrt{\frac{\pi}{2e}}\frac{2\beta}{\sqrt{\beta-1}}}^{\abs{Z'}} \int_{\bu\in \R_+^{Z'}}\prod_{i\in Z'}\PP{g_i\pa{\sqrt{\frac{\beta }{4k_i}}\abs{\eta_i}}\geq  {{u_i\vee g_i(0)}}}\mathrm{d}\bu\\&
     = \pa{\sqrt{\frac{\pi}{2e}}\frac{2\beta}{\sqrt{\beta-1}}}^{\abs{Z'}} \int_{\bu\in \R_+^{Z'}}\prod_{i\in Z'}\PP{g_i\pa{\sqrt{\frac{\beta }{4k_i}}\abs{\eta_i}}\geq  {{u_i}}}\mathrm{d}\bu\\&
     = \pa{\sqrt{\frac{\pi}{2e}}\frac{2\beta}{\sqrt{\beta-1}}}^{\abs{Z'}} \prod_{i\in Z'}\int_{0}^\infty\PP{g_i\pa{\sqrt{\frac{\beta }{4k_i}}\abs{\eta_i}}\geq  {{u_i}}}\mathrm{d}u_i\\&
     = \pa{\sqrt{\frac{\pi}{2e}}\frac{2\beta}{\sqrt{\beta-1}}}^{\abs{Z'}} \prod_{i\in Z'}\EE{g_i\pa{\sqrt{\frac{\beta }{4k_i}}\abs{\eta_i}}}.
    \end{align*}
    We now want to bound $\EE{g_i\pa{\sqrt{\frac{\beta }{4k_i}}\abs{\eta_i}}}.$ We define $\alpha =2 - \sqrt{2}$, the unique solution in $(1/2,1)$ of $\alpha-1/2=(\alpha-1)^2/2$. Notice that $\alpha-1/2\geq 1/12$. Define $\eps_i\triangleq\eps\sqrt{\frac{4k_i}{\beta }}$.
    By definition, we have
\begin{align*}
    \EE{g_i\pa{\sqrt{\frac{\beta }{4k_i}}\abs{\eta_i}}} & = \int_{-\infty}^{+\infty} \frac{e^{-x^2/2}}{\int_{x - \varepsilon_i}^{x + \varepsilon_i} e^{-y^2/2} \mathrm{d}y}\mathrm{d}x - 1 \\
    & =  \underbrace{2\int_{\alpha \varepsilon_i}^{+ \infty}\frac{1}{\int_{x - \varepsilon_i}^{x + \varepsilon_i} e^{-\frac{y^2-x^2}{2}} \mathrm{d}y} \mathrm{d}x}_{A_1} + \underbrace{\int_{-\alpha \varepsilon_i}^{\alpha \varepsilon_i}\frac{e^{-x^2/2}}{\int_{x - \varepsilon_i}^{x + \varepsilon_i} e^{-y^2/2} \mathrm{d}y}\mathrm{d}x - 1}_{A_2}.
\end{align*}
We first bound $A_1$. With the change of variable $u = y-x$, we get:
\begin{align*}
    A_1 & = 2\int_{\alpha \varepsilon_i}^{+ \infty}\frac{1}{\int_{- \varepsilon_i}^{\varepsilon_i} e^{-u^2/2 - ux} \mathrm{d}u} \mathrm{d}x \\
    & \leq 2\int_{\alpha \varepsilon_i}^{+ \infty}\frac{1}{\int_{-\varepsilon_i}^{0} e^{-u^2/2 - ux} \mathrm{d}u} \mathrm{d}x
\end{align*}
Note that for $x\geq \alpha \varepsilon_i$ and $u \in [-\eps_i, 0]$, $-u^2/2 - ux \geq -(1-\frac{1}{2\alpha})ux$ and thus:
\begin{align}
    A_1 & \leq 2\int_{\alpha \varepsilon_i}^{+ \infty}\frac{1}{\int_{-\varepsilon_i}^{0} e^{-(1-\frac{1}{2\alpha})ux} \mathrm{d}u} \mathrm{d}x \nonumber\\
    & = 2\int_{\alpha \varepsilon_i}^{+ \infty} \frac{(1-\frac{1}{2\alpha})x}{e^{(1-\frac{1}{2\alpha})\varepsilon_i x}-1} \mathrm{d}x.\label{rel:tworegimes} \end{align}
    We distinguish two regimes. First, if $\eps_i^2\geq 12$, then
    \begin{align*}\eqref{rel:tworegimes}
    &\leq \frac{2e^{\pa{\alpha-\frac{1}{2}}\varepsilon_i^2}}{e^{(\alpha-\frac{1}{2})\varepsilon_i^2}-1}\int_{\alpha \varepsilon_i}^{+ \infty} \pa{1-\frac{1}{2\alpha}}xe^{-(1-\frac{1}{2\alpha})\varepsilon_i x} \mathrm{d}x \\
    & = \frac{2e^{(\alpha-\frac{1}{2})\varepsilon_i^2}}{e^{(\alpha-\frac{1}{2})\varepsilon_i^2}-1}\frac{1}{(1-\frac{1}{2\alpha})\varepsilon_i^2}\int_{(\alpha-\frac{1}{2})\varepsilon_i^2}^{+ \infty}xe^{- x} \mathrm{d}x \\
    & = \frac{2e^{(\alpha-\frac{1}{2})\varepsilon_i^2}}{e^{(\alpha-\frac{1}{2})\varepsilon_i^2}-1} \frac{1}{(1-\frac{1}{2\alpha})\varepsilon_i^2}\left[-(x+1)e^{-x} \right]_{(\alpha-\frac{1}{2})\varepsilon_i^2}^{\infty} \\
    & = \frac{2e^{(\alpha-\frac{1}{2})\varepsilon_i^2}}{e^{(\alpha-\frac{1}{2})\varepsilon_i^2}-1} \frac{1}{(1-\frac{1}{2\alpha})\varepsilon_i^2} \pa{\pa{\alpha-\frac{1}{2}}\varepsilon_i^2+1}e^{-(\alpha-\frac{1}{2})\varepsilon_i^2} \\
    & = \frac{2}{e^{(\alpha-\frac{1}{2})\varepsilon_i^2}-1}\pa{ \alpha+\frac{\alpha}{(\alpha-\frac{1}{2})\varepsilon_i^2}}
    \\
    & \leq {4 e^{-\varepsilon_i^2/12}} .
\end{align*}
Otherwise, we have
\begin{align*}
    \eqref{rel:tworegimes} &= \frac{2(1-\frac{1}{2\alpha})}{\varepsilon_i^2}\int_{\alpha \varepsilon_i^2}^\infty \frac{u}{e^{\pa{1-\frac{1}{2\alpha}}u}-1} \mathrm{d}u \\&\leq \frac{2(1-\frac{1}{2\alpha})}{\varepsilon_i^2}\int_{0}^\infty \frac{u}{e^{\pa{1-\frac{1}{2\alpha}}u}-1} \mathrm{d}u \\&= \frac{2(1-\frac{1}{2\alpha})}{\varepsilon_i^2}\frac{\pi^2}{6\pa{1-\frac{1}{2\alpha}}^2}
    \\&\leq \frac{6\beta}{\eps^2}.
\end{align*}

We now bound $A_2$. 
As $x \in [-\alpha \varepsilon_i, \alpha \varepsilon_i]$, it comes that $[-(1-\alpha)\varepsilon_i, (1-\alpha)\varepsilon_i] \subset [x-\varepsilon_i, x+\varepsilon_i]$. This implies that
\begin{align*}
A_2 & \leq \frac{\int_{-\alpha \varepsilon_i}^{\alpha \varepsilon_i} e^{-x^2/2} \mathrm{d} x }{\int_{-(1-\alpha) \varepsilon_i}^{(1-\alpha) \varepsilon_i} e^{-x^2/2} \mathrm{d} x} - 1\\
& = \frac{2 \int_{(1-\alpha)\varepsilon_i}^{\alpha \varepsilon_i} e^{-x^2/2}\mathrm{d}x}{\int_{-(1-\alpha) \varepsilon_i}^{(1-\alpha) \varepsilon_i} e^{-x^2/2} \mathrm{d} x}  \\
& \leq \frac{2 \int_{(1-\alpha)\varepsilon_i}^{\infty} e^{-x^2/2}\mathrm{d}x}{\int_{-(1-\alpha) \varepsilon_i}^{(1-\alpha) \varepsilon_i} e^{-x^2/2} \mathrm{d} x} 
\\
& \leq \frac{  e^{-{(1-\alpha)^2\varepsilon_i^2}/2}}{1-e^{-{(1-\alpha)^2\eps_i^2}/{2}}}\leq \pa{1+\frac{12}{\eps_i^2}}e^{-\eps_i^2/12}.
\end{align*}
The penultimate inequality relies on  $\int_{x}^\infty e^{-u^2/2} \mathrm{d}u {\leq} \sqrt{\frac{\pi}{2}}{e^{-x^2/2}}$ (see \citet{jacobs1965principles}, eq. (2.122)).
We obtain again two regimes: $2e^{-\eps_i^2/12}$ if $\eps_i^2\geq 12$, and $1+\frac{3\beta }{\eps^2}$ otherwise.
To summarize, we proved that $\eqref{rel:stodom}$ is bounded by
\[\pa{\sqrt{\frac{\pi}{2e}}\frac{2\beta}{\sqrt{\beta-1}}}^{\abs{Z'}} \prod_{i\in Z'} \pa{\II{\eps^2{\frac{4k_i}{\beta }}< 12}\pa{1+9\frac{\beta }{\eps^2}}
+\II{\eps^2{\frac{4k_i}{\beta }}\geq 12}{6e^{-\eps^2{\frac{k_i}{3\beta }}}}}.\]
After the summation on $\bk$, on $Z'$, on $q$, and on $Z$, we obtain that there exists two constants $C,C'$ such that
\[\sum_{Z\subset \textsc{t}(A^*),~Z\neq \emptyset}\sum_{q\geq 1}\sum_{Z'\subset Z,~Z'\neq \emptyset}\sum_{\bk\in [q..\infty)^{Z'}} \eqref{rel:stodom}\leq \pa{C\eps^{-2}\beta}\pa{\frac{C'\beta}{\sqrt{\beta-1}}\eps^{-4}\beta^2}^{m^*}.\]
 \end{proof}
\paragraph{Step 2: bound under $\fZ_t\wedge\fB_t$}
The filtered regret bound 
\[\EE{\sum_{t=1}^T\Delta_t\II{\fZ_t\wedge\fB_t}}\leq\frac{n\Delta_{\max}}{p^*}\pa{1+\pa{\frac{\Delta_{\min}}{2nB}-\frac{({m^*}^2+1)\eps}{n}}^{-2}}\]
is obtained as follows. Let $t\geq 1$. First, note that $\fB_t$ implies \[\set{\exists i \in \textsc{t}\pa{A_t} \text{ s.t. } nB_i\abs{\bar\mu_{i,t-1} - \mu^*_i}>{{\Delta_{\min}}/2-B({m^*}^2+1)\eps}}.\]
Then, fixing $i\in [n]$, we can ensure that $i\in S_t$ in the event, using that $p_i(A_t) = \PPc{i\in S_t}{\cF_t}$:
\begin{align*}&\EE{\sum_{t=1}^T \II{i\in \textsc{t}\pa{A_t},~nB_i\abs{\bar\mu_{i,t-1} - \mu^*_i}>{{\Delta_{\min}}/2-B({m^*}^2+1)\eps}}}\\&=\EE{\sum_{t=1}^T \frac{p_i(A_t)}{p_i(A_t)}\II{i\in \textsc{t}\pa{A_t},~nB_i\abs{\bar\mu_{i,t-1} - \mu^*_i}>{{\Delta_{\min}}/2-B({m^*}^2+1)\eps}}}\\&=\EE{\EEc{\sum_{t=1}^T \frac{1}{p_i(A_t)}\II{i\in {S_t},~nB_i\abs{\bar\mu_{i,t-1} - \mu^*_i}>{{\Delta_{\min}}/2-B({m^*}^2+1)\eps}}}{\cF_t}}
\\&=\EE{\sum_{t=1}^T \frac{1}{p_i(A_t)}\II{i\in {S_t},~nB_i\abs{\bar\mu_{i,t-1} - \mu^*_i}>{{\Delta_{\min}}/2-B({m^*}^2+1)\eps}}}
\\&\leq\EE{\sum_{t=1}^T \frac{1}{p^*}\II{i\in {S_t},~nB_i\abs{\bar\mu_{i,t-1} - \mu^*_i}>{{\Delta_{\min}}/2-B({m^*}^2+1)\eps}}}
\\&\leq\frac{1}{p^*}\sum_{t\geq 0} 1\wedge \pa{2\exp\pa{-2t\pa{\frac{\Delta_{\min}}{2nB_i}-\frac{B({m^*}^2+1)\eps}{nB_i}}^2}}
\\&\leq \frac{1}{p^*} \pa{1+\frac{2\exp\pa{-2\pa{\frac{\Delta_{\min}}{2nB_i}-\frac{B({m^*}^2+1)\eps}{nB_i}}^2}}{1-\exp\pa{-2\pa{\frac{\Delta_{\min}}{2nB_i}-\frac{B({m^*}^2+1)\eps}{nB_i}}^2}}}
\\&\leq\frac{1+\pa{\frac{\Delta_{\min}}{2nB_i}-\frac{B({m^*}^2+1)\eps}{nB_i}}^{-2}}{{p^*}}
\\&\leq\frac{1+\pa{\frac{\Delta_{\min}}{2nB}-\frac{({m^*}^2+1)\eps}{n}}^{-2}}{{p^*}}
.\end{align*}
\paragraph{Step 3: bound under $\fZ_t\wedge\fD_t$}
The filtered regret bound
\begin{align*} \EE{\sum_{t=1}^T\Delta_t\II{\fZ_t\wedge\fD_t}}\leq\Delta_{\max}\sum_{t\in [T]}\EE{\PPc{\fD_t}{\cH_t}}\leq \Delta_{\max}\sum_{t\in [T]}1/T=\Delta_{\max}\end{align*}
follows from the following Lemma~\ref{lem:nlem4}.
\begin{lemma}\label{lem:nlem4}
 In Algorithm~\ref{algo:tsbeta}~and~\ref{algo:tsgauss}, for all round $t\geq 1$, we have that the probability 
 \[\PPc{\norm{\bp\pa{A_t}\odot\bB\odot\pa{\btheta_{t}-\bar\bmu_{t-1}} }_1\geq\sqrt{{2}{\log\pa{2^n\pa{1+\ceil{\log_2\pa{{1}/{p^*}}}}^nT}}\sum_{i\in  [n]}\frac{p_i(A_t)^2B_i^2\beta}{N_{i,t-1}}}}{\cH_t}.\]
 is lower than $1/T$.
\end{lemma}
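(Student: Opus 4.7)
The plan is to establish the stated conditional tail bound by reducing it to a standard sub-Gaussian concentration inequality, via four steps: (i) conditioning on $\cH_t$, (ii) a sign decomposition to remove the absolute values in $\|\cdot\|_1$, (iii) a dyadic discretization of the $\btheta_t$-dependent weights $p_i(A_t)$, and (iv) a union bound over the resulting finite grid of configurations. Conditioning on $\cH_t$ freezes $N_{i,t-1}$ and $\bar\mu_{i,t-1}$ while keeping $(\theta_{i,t})_{i\in [n]}$ conditionally independent. Writing $|\theta_{i,t} - \bar\mu_{i,t-1}| = s_i (\theta_{i,t} - \bar\mu_{i,t-1})$ with $s_i \in \{-1,+1\}$ contributes the $2^n$ factor inside the logarithm. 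Since $p_i(A_t) \in \{0\} \cup [p^*, 1]$, I let $\tilde p_i$ be the smallest dyadic upper bound of $p_i(A_t)$ (setting $\tilde p_i = 0$ when $p_i(A_t) = 0$), so that $p_i(A_t) \leq \tilde p_i \leq 2 p_i(A_t)$ whenever $\tilde p_i > 0$, with at most $1 + \ceil{\log_2(1/p^*)}$ possible values of $\tilde p_i$ per coordinate and hence at most $2^n(1+\ceil{\log_2(1/p^*)})^n$ configurations $(\bs, \tilde\bp)$.

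For each fixed configuration, the random variable $\sum_i s_i \tilde p_i B_i (\theta_{i,t} - \bar\mu_{i,t-1})$ is, conditionally on $\cH_t$, a linear combination of independent centered variables, each sub-Gaussian with variance proxy $\beta/(4 N_{i,t-1})$: for Algorithm~\ref{algo:tsgauss} this holds by construction of the Gaussian prior, and for Algorithm~\ref{algo:tsbeta} (with the convention $\beta = 1$) this follows from standard sub-Gaussian bounds on the Beta posterior combined with $\gamma_i + \delta_i \geq N_{i,t-1}$. Hence this linear combination is sub-Gaussian with variance proxy $\sum_i \tilde p_i^2 B_i^2 \beta/(4 N_{i,t-1})$, and the one-sided sub-Gaussian tail bound at level $\delta = 1/(T \cdot 2^n (1+\ceil{\log_2(1/p^*)})^n)$, together with a union bound over the configurations, yields with conditional probability at least $1 - 1/T$ that $\sum_i s_i \tilde p_i B_i (\theta_{i,t} - \bar\mu_{i,t-1}) \leq \sqrt{2\log(1/\delta)\sum_i \tilde p_i^2 B_i^2 \beta /(4 N_{i,t-1})}$ holds for the realized $(\bs, \tilde\bp)$.

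On that event, using $p_i(A_t) \leq \tilde p_i$ gives $\sum_i p_i(A_t) B_i |\theta_{i,t} - \bar\mu_{i,t-1}| \leq \sum_i s_i \tilde p_i B_i (\theta_{i,t} - \bar\mu_{i,t-1})$, and substituting the upper bound $\tilde p_i^2 \leq 4 p_i(A_t)^2$ inside the square root converts $\tilde p_i^2/(4 N_{i,t-1})$ into $p_i(A_t)^2/N_{i,t-1}$, recovering the constant $2$ in front of the sum in the stated bound. The main technical point will be to verify the sub-Gaussian proxy $\beta/(4 N_{i,t-1})$ uniformly for both the Gaussian and Beta versions; the dyadic discretization, while elementary, also requires some care so that the $p_i(A_t) = 0$ coordinates can be absorbed without inflating the log factor beyond $\log(2^n(1+\ceil{\log_2(1/p^*)})^n T)$.
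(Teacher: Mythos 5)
Your proposal is correct and follows essentially the same route as the paper's proof: a dyadic (factor-of-2) discretization of the triggering probabilities accounting for the $\pa{1+\ceil{\log_2(1/p^*)}}^n$ term, a treatment of the absolute values contributing the $2^n$ (you union-bound over sign vectors where the paper uses $e^{\lambda\abs{x}}\leq e^{\lambda x}+e^{-\lambda x}$ inside the Chernoff bound, which is equivalent), and the conditional sub-Gaussianity of $\theta_{i,t}-\bar\mu_{i,t-1}$ with variance proxy $\beta/(4N_{i,t-1})$ for both the Gaussian and Beta versions. The only cosmetic difference is that you bracket $p_i(A_t)$ from above by a dyadic value while the paper brackets from below; both lose the same factor of $2$ and yield the stated threshold.
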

\begin{proof}
We rely on the fact that conditionally on the history, the sample $\btheta_t$ is either a Gaussian random vector of mean $\bar\bmu_{t-1}$ and of diagonal covariance given by $\beta N_{i,t-1}^{-1}/4$ (for Algorithm~\ref{algo:tsgauss}), or a product of Beta random variable, that is sub-Gaussian with the same covariance matrix \citep{marchal2017sub} (for Algorithm~\ref{algo:tsbeta}). We thus define the functions \[\alpha_t(A)\triangleq\sqrt{{2}{\log\pa{2^n\pa{1+\ceil{\log_2\pa{{1}/{p^*}}}}^nT}}\sum_{i\in  [n]}\frac{p_i(A_t)^2B_i^2\beta}{N_{i,t-1}}},\]\[\lambda_t(A) \triangleq \frac{2\alpha_t(A)}{\sum_{i\in A}\beta B_i^2p_i(A)^2/N_{i,t-1} }, \] 
we have, with $Q\triangleq\pa{\set{0}\cup\set{2^{-k},~k\in [\ceil{\log_2\pa{{1}/{p^*}}}]}}^n$, 
\begin{align*}&\PPc{\norm{\bp\pa{A_t}\odot\bB\odot\pa{\btheta_{t}-\bar\bmu_{t-1}} }_1\geq \alpha_t(\bp(A_t))}{\cH_t}\\
&\leq
\sum_{\bq\in Q}\PPc{\bq\leq\bp\pa{A_t}\leq 2\bq,\norm{\bp\pa{A_t}\odot\bB\odot\pa{\btheta_{t}-\bar\bmu_{t-1}} }_1\geq \alpha_t(\bp(A_t))}{\cH_t}
\\&\leq 
\sum_{\bq\in Q}\PPc{\norm{\bq\odot\bB\odot\pa{\btheta_{t}-\bar\bmu_{t-1}} }_1\geq \alpha_t(\bq)/2}{\cH_t}
\\&\leq 
\sum_{\bq\in Q} e^{-\lambda_t(\bq) \alpha_t(\bq)/2}\EEc{e^{\lambda_t(\bq) \norm{\bq\odot\bB\odot\pa{\btheta_{t}-\bar\bmu_{t-1}} }_1}}{\cH_t}
\\&\leq 
\sum_{\bq\in Q} e^{-\lambda_t(\bq) \alpha_t(\bq)/2}\prod_{i\in [n]}\EEc{e^{\lambda_t(\bq) q_iB_i\abs{{\theta_{i,t}-\bar\mu_{i,t-1}} }}}{\cH_t}
\\&\leq 
\sum_{\bq\in Q}e^{-\lambda_t(\bq) \alpha_t(\bq)/2}\prod_{i\in [n]}\EEc{e^{\lambda_t(\bq) q_iB_i\pa{{\theta_{i,t}-\bar\mu_{i,t-1}} }}+e^{\lambda_t(\bq) q_iB_i\pa{{\bar\mu_{i,t-1} -\theta_{i,t}} }}}{\cH_t}
\\&\leq \sum_{\bq\in Q}2^{n}e^{-\lambda_t(\bq) \alpha_t(\bq)/2}{e^{\lambda_t(\bq)^2 {\sum_{i\in A}\beta B_i^2q_i^2/(8 N_{i,t-1}) }}}\leq 1/T.
\end{align*}
\end{proof}
\paragraph{Step 4: bound under $\fZ_t\wedge\fC_t\wedge\neg\fB_t\wedge\neg\fD_t$}
We get that $\EE{\sum_{t=1}^T\Delta_t\II{\fZ_t\wedge\fC_t\wedge\neg\fB_t\wedge\neg\fD_t}}$ is bounded by
\begin{align*} n \Delta_{\max} + \sum_{i\in [n]}\frac{8\pa{3+\log\pa{m}}\beta B^2_{i}{\log\pa{2^n\pa{1+\ceil{\log_2\pa{\frac{1}{p^*}}}}^nT}}}{\min_{A\in \cA,~p_i(A)>0,~\Delta(A)>0}\Delta(A)/p_i(A)}
\end{align*}
from the following derivations. Let $t\geq 1$. 
Under $\fZ_t\wedge\fC_t\wedge\neg\fB_t\wedge\neg\fD_t$,  we have
\begin{align*}
  \Delta_t &\leq \norm{\bp\pa{A_t}\odot\bB\odot\pa{\btheta_{t}-\bmu^*} }_1+B\pa{{m^*}^2+1}\eps  &\fC_t
   \\&\leq \norm{\bp\pa{A_t}\odot\bB\odot\pa{\btheta_{t}-\bar\bmu_{t-1}} }_1+\norm{\bp\pa{A_t}\odot\bB\odot\pa{\bar\bmu_{t-1}-\bmu^*} }_1+B\pa{{m^*}^2+1}\eps
\\&\leq \norm{\bp\pa{A_t}\odot\bB\odot\pa{\btheta_{t}-\bar\bmu_{t-1}} }_1+\Delta_{\min}/2-B\pa{{m^*}^2+1}\eps+B\pa{{m^*}^2+1}\eps&\neg\fB_t
\\&\leq \norm{\bp\pa{A_t}\odot\bB\odot\pa{\btheta_{t}-\bar\bmu_{t-1}} }_1+\Delta_t/2&\fZ_t
\\&\leq \sqrt{{2}{\log\pa{2^n\pa{1+\ceil{\log_2\pa{{1}/{p^*}}}}^nT}}\sum_{i\in {A_t}}\frac{p_i(A_t)^2B_i^2\beta}{N_{i,t-1}}} +\Delta_t/2.&\neg\fD_t
\end{align*}
 Thus, the following event holds 
\[\fA_t\triangleq\set{ \Delta_t\leq \sqrt{{4\log\pa{2^n\pa{1+\ceil{\log_2\pa{{1}/{p^*}}}}^nT}}\sum_{i\in {A_t}}\frac{p_i(A_t)^2B_i^2\beta}{N_{i,t-1}}}},\]
and we can apply Lemma~\ref{lem:log(m)} to get the bound
$$\EE{\sum_{t=1}^T \Delta_t\II{\fA_t}}\leq n \Delta_{\max} + \sum_{i\in [n]}\frac{\pa{24+8\log\pa{m}}\beta B^2_{i}\log\pa{2^n\pa{1+\ceil{\log_2\pa{{1}/{p^*}}}}^nT}}{\min_{A\in \cA,~p_i(A)>0,~\Delta(A)>0}\Delta(A)/p_i(A)}.$$
\begin{lemma}[Adapted from \citet{Wang2018}]\label{lem:log(m)}
 Let's fix the time horizon $T$. For all $i\in [n]$, let ${\beta_{i,T}}\in \R_+$.
For $t\geq 1$,
consider the event
$$\fA_t\triangleq\set{\Delta_t \leq \sqrt{\frac{1}{2}\sum_{i\in [n]}\frac{p_i(A_t)^2\beta_{i,T}}{ N_{i,t-1}}}}.$$
Let 
$$\delta_{i,\min}\triangleq\min_{A\in \cA,~p_i(A)>0,~\Delta(A)>0}\Delta(A)/p_i(A).$$
Then, we have 
$$\EE{\sum_{t=1}^T \Delta_t\II{\fA_t}}\leq n \Delta_{\max} + \sum_{i\in [n]}\frac{\pa{3+\log\pa{m}}\beta_{i,T}}{\delta_{i,\min}}.$$
\end{lemma}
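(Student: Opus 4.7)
The statement is a deterministic-plus-concentration counting lemma; I would argue it by a squaring-and-gap trick in the style of Kveton et al.\ (2015) and Wang--Chen (2018).

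First, I would peel off the rounds where some triggerable arm $i\in \textsc{t}(A_t)$ has $N_{i,t-1}=0$: on such a round the right-hand side of $\fA_t$ is vacuously $+\infty$, but the contribution is bounded by $\Delta_{\max}$, and each arm can be ``freshly triggered'' only once, so the total contribution from these rounds is at most $n\Delta_{\max}$, matching the additive term in the statement.

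For the remaining rounds (where $N_{i,t-1}\ge 1$ for every $i\in \textsc{t}(A_t)$), I would square the defining inequality of $\fA_t$ to obtain
\[\Delta_t^2 \II{\fA_t} \,\le\, \tfrac12 \sum_{i\in[n]} \frac{p_i(A_t)^2\, \beta_{i,T}}{N_{i,t-1}},\]
and invoke the key gap inequality coming straight from the definition of $\delta_{i,\min}$: whenever $\Delta_t>0$ and $p_i(A_t)>0$, one has $\delta_{i,\min}\, p_i(A_t) \le \Delta_t$, so $p_i(A_t)^2 \le p_i(A_t)\Delta_t/\delta_{i,\min}$. Substituting and dividing through by $\Delta_t$ gives the amortizable bound
\[\Delta_t \II{\fA_t}\, \le\, \tfrac12 \sum_{i\in[n]} \frac{p_i(A_t)\, \beta_{i,T}}{\delta_{i,\min}\, N_{i,t-1}}.\]
I would then swap the sums over $t$ and $i$, take expectation, and use $\EEc{\II{i\in S_t}}{\cF_t} = p_i(A_t)$ to turn $\EE{\sum_t p_i(A_t)\II{N_{i,t-1}\ge 1}/N_{i,t-1}}$ into a harmonic-type sum $\EE{\sum_{k=1}^{N_{i,T}} 1/k}$.

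The main obstacle is the very last step: upgrading the naive bound $\log N_{i,T}\le \log T$ into the sharper $\log m$ appearing in the statement. This requires a Kveton-style amortization exploiting both the cardinality bound $|\textsc{t}(A_t)|\le m$ and the constraint imposed by $\fA_t$: one sorts the arms of $\textsc{t}(A_t)$ by their counters, charges each round to arms in a rank-dependent fashion, and applies a dyadic peeling of counter values. A pigeonhole argument then produces only $O(\log m)$ distinct relevant ``levels'' per arm, which together with absorbed lower-order constants yields the claimed $(3+\log m)$ factor. Every other step reduces to routine algebraic manipulation and straightforward use of the gap definition and the conditional-expectation identity.
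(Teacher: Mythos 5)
Your opening reductions are sound and in fact coincide with the paper's own ingredients: the gap identity $p_i(A_t)\,\delta_{i,\min}\le \Delta_t$ (valid whenever $\Delta_t>0$ and $p_i(A_t)>0$), the squaring of the inequality defining $\fA_t$, and the counting identity $\EEc{\II{i\in S_t}}{\cF_t}=p_i(A_t)$ which converts $\sum_t p_i(A_t)f(N_{i,t-1})$ into $\sum_{k\ge 0}f(k)$ in expectation. The problem is that your single-regime charge $\Delta_t\II{\fA_t}\le \tfrac12\sum_{i}p_i(A_t)\beta_{i,T}/(\delta_{i,\min}N_{i,t-1})$ can only ever produce a harmonic sum of order $\log N_{i,T}\le \log T$, and you explicitly leave the upgrade from $\log T$ to $\log m$ --- which is the entire technical content of the lemma --- as an unexecuted sketch ("sort the arms by counter, rank-dependent charging, dyadic peeling, pigeonhole"). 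That sketch points at the Kveton-style cascade-of-thresholds argument, which is a genuinely different and substantially more involved decomposition; you have not verified that it survives the triggering-probability weights $p_i(A_t)$, and as written the proof stops exactly where the difficulty begins. This is a genuine gap, not a routine omission.

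For comparison, the paper gets $\log m$ without any per-round sorting, via a two-regime allocation with a hard truncation. Set $L_{i,1}=m\beta_{i,T}/\delta_{i,\min}^2$ and $L_{i,2}=\beta_{i,T}/\delta_{i,\min}^2$, and charge arm $i$ nothing once $N_{i,t-1}>L_{i,1}$: this is legitimate because, by the same gap identity, each such arm contributes at most $\Delta_t/m$ to the squared bound $\sum_i p_i(A_t)^2\beta_{i,T}/(\Delta_t N_{i,t-1})$, hence at most $\Delta_t$ in total, which is absorbed using the factor $\tfrac12$ inside $\fA_t$. Below $L_{i,2}$ the per-round charge is $p_i(A_t)\beta_{i,T}^{1/2}/N_{i,t-1}^{1/2}$ rather than the $1/N_{i,t-1}$ charge you use, and its counted sum is $2\sqrt{L_{i,2}\beta_{i,T}}=2\beta_{i,T}/\delta_{i,\min}$ with no logarithm at all; only in the window $L_{i,2}<N_{i,t-1}\le L_{i,1}$ does a $1/N_{i,t-1}$ charge appear, and its harmonic sum is $1+\log(L_{i,1}/L_{i,2})=1+\log m$, giving the stated $3+\log m$. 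If you want to complete your argument, the minimal repair is to import exactly this truncation-plus-two-regime step in place of the peeling you allude to.
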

 \begin{proof}
 We use the regret allocation method from \citet{Wang2018}. Specifically, we want to prove that for
any time step $t$ where $\fA_t$ holds, we have the following allocation of the regret to each arm $i\in [n]$:
 \begin{align}\Delta_t \leq \sum_{i\in [n]}g_i(N_{i,t-1}),\label{rel:allocation}\end{align}
 where the allocation functions are defined for all $i\in [n]$ as
 $$ g_i(t)\triangleq
 \II{t=0}\Delta_{\max}
 + 
 \II{0<t\leq L_{i,2}} {\frac{p_i(A_t)\beta^{1/2}_{i,T}}{t^{1/2}}}
  + 
 \II{L_{i,2}<t\leq L_{i,1}} {\frac{p_i(A_t)\beta_{i,T}}{t\delta_{i,\min}}},
 $$
 $$L_{i,1}\triangleq \frac{m\beta_{i,T}}{\delta^2_{i,\min}},\quad L_{i,2}\triangleq \frac{\beta_{i,T}}{\delta^2_{i,\min}}.$$
 Indeed, we can already see that such an allocation produces the bound we are looking for. Notably the following derivation uses, for $i\in [n]$, the equality $p_i(A_t)=\EEc{\II{N_{i,t}=N_{i,t-1}+1}}{\cF_t}$, so that for all function $f$,
 a sum of the form $\sum_{t\geq 1}p_i(A_t)f(N_{i,t-1})$ is equal in expectation to $\sum_{t\geq 0} f(t)$. 
 \begin{align*}
 &\EE{\sum_{t=1}^T\II{\Delta_t \leq \sum_{i\in [n]}g_i(N_{i,t-1})}\Delta_t}
 \\
 &\leq \EE{\sum_{t\in [T]}\sum_{i\in [n]} g_i(N_{i,t-1})}
 \\
 &\leq 
 n \Delta_{\max} + \EE{\sum_{i\in [n]} \sum_{t=1}^{L_{i,2}} \frac{\beta^{1/2}_{i,T}}{t^{1/2}}} +\EE{\sum_{i\in [n]} \sum_{t=L_{i,2}+1}^{L_{i,1}} \frac{\beta_{i,T}}{t \delta_{i,\min}}} 
 \\
 &\leq
 n \Delta_{\max} + \sum_{i\in [n]}2\sqrt{L_{i,2}\beta_{i,T}} + \sum_{i\in [n]}\pa{1+\log\pa{\frac{L_{i,1}}{L_{i,2}}}} \frac{\beta_{i,T}}{\delta_{i,\min}}
 \\
 &=
  n \Delta_{\max} + \sum_{i\in [n]}\frac{2\beta_{i,T}}{\delta_{i,\min}} + \sum_{i\in [n]}\pa{1+\log\pa{m}} \frac{\beta_{i,T}}{\delta_{i,\min}}.
 \end{align*}
Therefore, we prove the lemma if we show that $\fA_t$ implies \eqref{rel:allocation}.
Let $t\geq 1$ be such that $\fA_t$ holds. We can assume that $\Delta_t>0$ (otherwise, the inequality~\eqref{rel:allocation} is trivial). We first observe that
 \begin{align}\nonumber
 \sum_{i\in [n],~N_{i,t-1}>L_{i,1}}{\frac{p_i(A_t)^2\beta_{i,T}}{\Delta_t N_{i,t-1}}}&\leq
     \sum_{i\in [n],~N_{i,t-1}>L_{i,1}}{\frac{p_i(A_t)^2\beta_{i,T}}{\Delta_t L_{i,1}}} \\&= \nonumber
      \sum_{i\in [n],~N_{i,t-1}>L_{i,1}}\pa{\frac{\delta_{i,\min}}{\Delta_t/p_i(A_t)}}^2 \frac{\Delta_t}{m}
      \\&\leq\nonumber
      \sum_{i\in [n],~N_{i,t-1}>L_{i,1}} \frac{\Delta_t}{m}
      \\&\leq \Delta_t.\label{rel:N>Li1}
 \end{align}
 This will be useful to prove that the allocation on all arms $i$ such that $N_{i,t-1}>L_{i,1}$ can be $0$.
 Then we distinguish the following cases:
 \begin{itemize}
     \item  If there exists $i_0\in A_t$ such that $N_{i_0,t-1}=0$, then 
 $$\Delta_t\leq \Delta_{\max}=g_{i_0}(N_{i_0,t-1}) \leq \sum_{i\in [n]} g_i(N_{i,t-1}).$$
 \item If there exists $i_0\in A_t$ such that $0<N_{i_0,t-1}\leq {p_{i_0}\pa{A_t}^2\beta_{i_0,T}}/{\Delta^2_{t}}$, then $N_{i_0,t-1}\leq L_{i_0,2}$ and we have 
 $$\Delta_t\leq \frac{p_{i_0}\pa{A_t}\beta^{1/2}_{i_0,T}}{N^{1/2}_{i_0,t-1}}={g_{i_0}\pa{N_{i_0,t-1}}}\leq \sum_{i\in [n]} g_i(N_{i,t-1}).$$
 \item If for all $i\in [n]$, $N_{i,t-1}> {p_{i}\pa{A_t}^2\beta_{i,T}}/{\Delta^2_{t}}$, then, 
 \begin{align}\nonumber\sum_{i\in [n],~L_{i,2}\geq N_{i,t-1}}\frac{p_i(A_t)^2\beta_{i,T}}{\Delta_t N_{i,t-1}} &= \sum_{i\in [n],~L_{i,2}\geq N_{i,t-1}}g_i(N_{i,t-1})\frac{p_i(A_t) \beta^{1/2}_{i,T}}{\Delta_t N^{1/2}_{i,t-1}}
 \\&\leq\label{rel:Li2>N}
 \sum_{i\in [n],~L_{i,2}\geq N_{i,t-1}}g_i(N_{i,t-1}).
 \end{align}
 On the other hand, using the event $\fA_t$, we have
 \begin{align*}
     \Delta_t & \leq \sum_{i\in [n]}\frac{p_i(A_t)^2\beta_{i,T}}{2\Delta_t N_{i,t-1}}
     \\& 
         = \sum_{i\in [n],~L_{i,1}\geq N_{i,t-1}}\frac{p_i(A_t)^2\beta_{i,T}}{2\Delta_t N_{i,t-1}} 
         + 
         \sum_{i\in [n],~N_{i,t-1}>L_{i,1}}{\frac{p_i(A_t)^2\beta_{i,T}}{2\Delta_t N_{i,t-1}}}. 
 \end{align*}
 Now, using \eqref{rel:N>Li1}, we get
 \begin{align*}
     \Delta_t & \leq \sum_{i\in [n],~L_{i,1}\geq N_{i,t-1}}\frac{p_i(A_t)^2\beta_{i,T}}{2\Delta_t N_{i,t-1}} + \frac{\Delta_t}{2}.
 \end{align*}
 We can therefore end the proof in the following way, using \eqref{rel:Li2>N},
  \begin{align*}
     \Delta_t & \leq \sum_{i\in [n],~L_{i,1}\geq N_{i,t-1}}\frac{p_i(A_t)^2\beta_{i,T}}{\Delta_t N_{i,t-1}} 
     \\
     &=
     \sum_{i\in [n],~L_{i,2}\geq N_{i,t-1}}\frac{p_i(A_t)^2\beta_{i,T}}{\Delta_t N_{i,t-1}}
     +
     \sum_{i\in [n],~L_{i,1}\geq N_{i,t-1}>L_{i,2}}\frac{p_i(A_t)^2\beta_{i,T}}{\Delta_t N_{i,t-1}}
     \\
     &\leq 
     \sum_{i\in [n],~L_{i,2}\geq N_{i,t-1}}g_i(N_{i,t-1})
     +
     \sum_{i\in [n],~L_{i,1}\geq N_{i,t-1}>L_{i,2}}\frac{p_i(A_t)\beta_{i,T}}{\delta_{i,\min} N_{i,t-1}}
     \\&= 
     \sum_{i\in [n],~L_{i,2}\geq N_{i,t-1}}g_i(N_{i,t-1})
     +
     \sum_{i\in [n],~L_{i,1}\geq N_{i,t-1}>L_{i,2}}g_i(N_{i,t-1})
     \\&=\sum_{i\in [n]} g_i\pa{N_{i,t-1}}.
 \end{align*}
 \end{itemize}
 \end{proof}
 \section{Proof of Theorem~\ref{thm:tsapprox}}\label{app:tsapprox}
 As for Theorem~\ref{thm:tsexact}, in the following proof, we will treat both algorithms at the same time. Let's first remark that we can apply steps 1,2 and 3 from Theorem~\ref{thm:tsexact}, but taking $r_1$ instead of the true reward function. Indeed,
 letting $E_{j,t}\in \argmax_{E\in \cE_{j,t}}r_j(E,\btheta_t)$ and
 $E_{j,t}^*\in \argmax_{E\in \cE_{j,t}}r_j(E,\bmu^*)$ for all $t\geq 1, j\in [\ell]$, we see that the sub-policy playing $E_{1,t}$ at round $t$ is actually minimizing the regret with respect to $r_1$ using \textsc{cts} with an exact oracle. Since $r_1$ satisfies the assumptions required for Theorem~\ref{thm:tsexact}, we can apply the steps. 
Although we still get $\Delta_{\max}$ in these bounds (because the suffered regret remains $\Delta_t$), we notice that $\Delta_{\min}$ is replaced by the minimal gap with respect to $r_1$. To get around this issue, when applying steps 1,2 and 3, we place ourselves under the event that the gap with respect to $r_1$ 
 is greater than $\Delta_{\min}/(2\sum_{j\in[\ell]} c_j)$. We can thus replace the minimal gap with respect to $r_1$ by this quantity in the bounds.
 To summarize, we can either place ourselves under the events (for $r_1$) of step 4, obtaining in parallel $3$ constant terms, or place ourselves under the event that the gap with respect to $r_1$ 
 is lower than $\Delta_{\min}/(2\sum_{j\in[\ell]} c_j)$. Our goal now is to do the same for the other reward functions $r_j$. However, since $\cE_{j,t}$ can depend on $E_{1,t},\dots,E_{j-1,t}$, we define the following filtration \[\pa{\cG_0,\cG_1,\dots,\cG_{\ell-1}}=\pa{\cH_t,\sigma\pa{\cH_t,E_{1,t}}, \sigma\pa{\cH_t,E_{1,t},E_{2,t}},\dots,\sigma\pa{\cH_t,E_{1,t},\dots,E_{\ell-1,t}}}.\]
 Let us suppose that we have treated the $r_1,\dots,r_{j-1}$ cases, then, we have at our disposal a filtered approximation regret against events $\fY_{1,t},\dots,\fY_{j-1,t}$ that $r_1,\dots,r_{j-1}$ are either in the situation of the step 4 or such the corresponding gap
 is lower than $\Delta_{\min}/(2\sum_{j\in[\ell]} c_j)$, respectively.
We can write the filtered approximation regret in the following way, by conditioning the expectation with this filtration to get rid of the randomness carried by $\cE_{j,t}$:
 \begin{align*}
  \EE{\EEc{\sum_{t\in [T]}\Delta_t}{\cG_{j-1}}\II{\fY_{1,t},\dots,\fY_{j-1,t}}}.\end{align*}
 We can now apply the same procedure as described above with the reward function $r_j$, on the inner conditional expectation, thus obtaining 3 additional $T$-independent terms and the new filtered approximation regret
  \begin{align*}
  \EE{\EEc{\sum_{t\in [T]}\Delta_t \II{\fY_{j,t}}}{\cG_{j-1}}\II{\fY_{1,t},\dots,\fY_{j-1,t}}}=\EE{\sum_{t\in [T]}\Delta_t\II{\fY_{1,t},\dots,\fY_{j,t}}}.\end{align*}
 Therefore, in the end, we have $3j$ constant terms and a filtered approximation regret where all reward functions $r_j$ are in the situation of step 4 or with a corresponding gap lower than $\Delta_{\min}/(2\sum_{j\in[\ell]} c_j)$. 
 Now, we place ourselves under this event to derive the dominant term of the bound on the approximation regret. 
 We let $J$ be the indices $j$ such that $r_j$ are in the situation of step 4.
 The derivation of step 4 applied to a function $r_j$ for $j\in J$ gives
 \begin{align*}
     r_j\pa{E_{j,t}^*,\bmu^*}-r_j\pa{E_{j,t},\bmu^*} &\leq 2 \norm{\bp\pa{A_t}\odot\bB_j\odot\pa{\btheta_{t}-\bar\bmu_{t-1}} }_1.
 \end{align*}
 We can therefore use equation~\eqref{eq:ora2} to obtain
 \begin{align*}
     \Delta_t &\leq \sum_{j\in [\ell]}\pa{ r_j\pa{E_{j,t}^*,\bmu^*}-r_j\pa{E_{j,t},\bmu^*}}\cdot c_j
     \\ &\leq 2\sum_{j\in [\ell]}\pa{{ r_j\pa{E_{j,t}^*,\bmu^*}-r_j\pa{E_{j,t},\bmu^*}} - \Delta_{\min}/(2\sum_{j\in[\ell]} c_j)}\cdot c_j
          \\ &\leq 2\sum_{j\in J}\pa{{ r_j\pa{E_{j,t}^*,\bmu^*}-r_j\pa{E_{j,t},\bmu^*}} - \Delta_{\min}/(2\sum_{j\in[\ell]} c_j)}\cdot c_j
     \\&\leq 4\sum_{j\in J}c_j \norm{\bp\pa{A_t}\odot\bB_j\odot\pa{\btheta_{t}-\bar\bmu_{t-1}} }_1.
 \end{align*}
 From there, we can repeat the end of the proof of Theorem~\ref{thm:tsexact} with the weight in front of each arm $i$ being $\sum_{j\in [\ell]} B_{ij}c_j$.
\section{Travelling salesman problem (TSP)}\label{app:tsp}
Here we give another example of a problem that falls into the \textsc{reduce2exact} setting, namely the \emph{Travelling salesman problem} (TSP). We will see that unlike the examples mentioned before, here we need to modify the algorithm a bit
to fully fall within the \textsc{reduce2exact} setting.

Given a complete undirected weighted graph $G=(V,E)$ whose distances $\bmu^*\triangleq(d(u,v))_{(u,v)\in E}$ have to satisfy the triangle inequality, the goal is to find an Hamiltonian cycle $A\in \cA\triangleq \set{\set{(v_0,v_1),\dots,(v_{\abs{V}-1},v_{\abs{V}})}: \set{v_1,\dots,v_{\abs{V}}=v_0}=V}$ of minimum cost $\sum_{i\in [\abs{V}]}d(v_{i-1},v_i)$. We consider the following oracle from the \citet{christofides1976worst} algorithm ($\alpha=2/3$).
\begin{itemize}
    \item $\ora_1(\bmu):$ The algorithm of Christofides 
    combines
$\cE_1\triangleq\set{\text{spanning trees of } G}$ and $\cE_2\triangleq\set{\text{perfect matchings of the subgraph }G'\text{ of }G\text{ induced by the vertices of odd order in }E_1 }$, with $r_1$ being the weight of the spanning tree and $r_2$ the weight of the perfect matching.
\item $\ora_2\pa{E_1,E_2}:$ $\ora_2$ combines the edges of $E_1$ and $E_2$ to form a connected multigraph $\tilde G=\pa{\tilde V, \tilde E}$ in which all vertices have even degree (so it is Eulerian), forms an Eulerian circuit in this multigraph, and finally, outputs the Hamiltonian cycle obtained by skipping repeated vertices (shortcutting).
\end{itemize}
  Thanks to the triangle inequality, shortcutting does not increase the weight, so we have
\begin{align*}
    \be_A\transpose\bmu^*\leq r_1(E_1,\bmu^*) + r_2(E_2,\bmu^*).
\end{align*}
Let's now deal with $A^*$. Removing an edge from $A^*$ produces a spanning tree, so $\be_{A^*}\transpose\bmu^* \geq r_1(E_1^*,\bmu^*)$.
On the other hand, by the triangle inequality,
the weight of the optimal TSP solution for $G'$ is lower than $\be_{A^*}\transpose\bmu^*$ (visiting more nodes does not, in any case, reduce the total cost). Taking every second edge of this cycle (which is of even length since all graphs have an even number of vertices of odd degree) we obtain a matching that has a weight less than half the weight of the cycle (if this is not the case we can take the complementary), so $\be_{A^*}\transpose\bmu^*/2\geq r_2(E_2^*,\bmu^*)$.
To summarize, we have
\begin{align*}
    \frac{2}{3}\be_A\transpose\bmu^* - \be_{A^*}\transpose\bmu^* &\leq \frac{2}{3}\pa{ r_1(E_1,\bmu^*) - r_1(E_1^*,\bmu^*) + r_2(E_2,\bmu^*) - r_2(E_2^*,\bmu^*)}.
\end{align*}
From the above, we can see that all the criteria of \textsc{reduce2exact} are satisfied, except Assumption~\ref{ass:smooth}. Indeed, we assume that we receive feedback from $A$, while we would need feedback from the set $\tilde E$. We could probably have foreseen that the TSP would pose a difficulty in our assumption: indeed, among the many operations performed by the oracle to build the final solution, shortcutting is the one that does not imply an optimization, but rather makes the solution feasible (so it does not represent a sub-problem as we have defined it in this paper). In other words, if we allowed the tour to pass over the same vertex several times, then the TSP would belong to \textsc{reduce2exact} by skipping the shortcutting step. 
Yet, we note that a workaround is possible by by taking a closer look at the shortcutting step: in this step, we have an Eulerian circuit that we follow by skipping some edges. Even if the skipped edges are replaced by new ones such that the distance traveled decreases, it is precisely the absence of feedback on the skipped edges that poses an issue. Therefore, for a given edge of the Eulerian circuit, it would be helpful to be able to guarantee that some feedback is obtained on this edge, i.e., that it has a chance to belong to the final Hamiltonian cycle. The trick is then to notice that the shortcutting step depends on the edge from which we start. In particular, this edge is guaranteed to be in the final Hamiltonian cycle.
This choice of the first edge is generally presupposed to be arbitrary and has no influence on the guarantees obtained previously, but for us, it can be used to force an edge to produce a feedback. 
More precisely, to choose this first edge, we can use a uniform randomization on $\tilde E$, meaning that for each edge $e\in \tilde E$, the probability $q_e$ that $e$ belongs to $A$ is such that $q_e\geq \frac{1}{\abs{\tilde E}}.$ 
Notice that since $\tilde E$ is a multigraph edge set, for a "true" edge $e\in E$, we can have two edges $e_1,e_2\in \tilde E$ representing it and in that case $q_{e_1}+q_{e_2}=p_e(A)$.
We thus get our Assumption~\ref{reduce2exact}:
\begin{align*}
    \abs{r_1(E_1,\bmu) + r_2(E_2,\bmu) - \pa{r_1(E_1,\bmu') + r_2(E_2,\bmu')}} \leq 
    \sum_{e\in \tilde E}\abs{\mu_e - \mu'_e}
    &\leq \abs{\tilde E} \sum_{e\in \tilde E}q_e\abs{\mu_e - \mu'_e}
    \\& = \abs{\tilde E} \sum_{e\in E}p_e(A)\abs{\mu_e - \mu'_e}.
\end{align*}
\end{document}